\documentclass[sigconf,preprint]{acmart}
\usepackage{booktabs}
\usepackage{arydshln}
\usepackage{makecell}
\usepackage{enumitem}
\usepackage{graphicx}
\usepackage{soul}
\usepackage{url}
\usepackage{rotating}
\usepackage{amsmath}
\usepackage{amsthm}
\usepackage{algorithm}
\usepackage{algorithmic}
\usepackage{multirow}
\usepackage{marvosym}
\usepackage{threeparttable}
\usepackage{mathrsfs}
\usepackage{color,xcolor}
\usepackage{colortbl}
\AtBeginDocument{%
  }
\setcopyright{none}
\begin{document}

\title[Emergent Modularity]{Learning Emergent Modular Representations in Multi-modality Medical Vision Foundation Models}

\author{Yuting He}
\email{yuting.he4@case.edu}
\affiliation{%
  \institution{Case Western Reserve University}
  \city{Cleveland}
  \state{Ohio}
  \country{USA}
}

\author{Chenyu You}
\email{chenyu.you@stonybrook.edu}
\affiliation{%
  \institution{Stony Brook University}
  \city{Stony Brook}
  \state{New York}
  \country{USA}
}

\author{Shuo Li}
\authornote{Corresponding Author.}
\email{shuo.li11@case.edu}
\affiliation{%
  \institution{Case Western Reserve University}
  \city{Cleveland}
  \state{Ohio}
  \country{USA}
}

\begin{abstract}
Multi-modality medical vision (MV) foundation models (FM) are fundamentally challenged by pronounced \textit{Non-IID} feature statistics across heterogeneous imaging modalities. Monolithic self-supervised optimization on such data induces conflicting gradients, driving representations to collapse toward modality-dominant shortcuts. This work reframes this failure as an imbalance between specialization and coordination in \textit{emergent modularity}, and proposes \textit{Director–Experts} (\textbf{DEX}), a modular network that explicitly regulates these dynamics in stacked modules. Each DEX module comprises a pool of \textit{experts}, dynamically adapted by our image-wise activation strategy, autonomously specializing in modality-dominant statistics, together with a \textit{director}, updated via our group exponential moving average, which distills multi-expert knowledge into a shared space for semantic integration across modalities, thus driving the emergence of modular representations. We curate a new benchmark, Medical Vision Universe, over 4 million images across 10 modalities, which provides a FM–level pre-training with the broadest coverage of distinct imaging modalities to our DEX. Extensive evaluations on 26 downstream tasks demonstrate improved optimization behavior and transferability, indicating DEX as a principled step toward general-purpose multi-modality medical AI. Our code and dataset will be opened at \href{https://github.com/YutingHe-list/DEX}{https://github.com/YutingHe-list/DEX}.
\end{abstract}

\keywords{Representation learning, Medical vision foundation models, Heterogeneity, Emergent Modularity}
\maketitle
\section{Introduction}
\label{sec:intro}
Multi-modality medical vision (MV) foundation models (FMs) promise unified, general-purpose representations that are able to be adapted across a wide range of clinical imaging tasks \cite{ma2024segment,ren2024medical,he2024foundation,bommasani2021opportunities}. However, a fundamental obstacle to this goal is the pronounced \emph{Non-Independent and Identically Distributed (\textbf{Non-IID})} nature of feature statistics across medical imaging modalities. Unlike natural images that share common optics and statistical priors, medical modalities (CT, MR, Ultrasound, etc.) are governed by distinct imaging physics and acquisition protocols, producing heterogeneous signals with disjoint feature distributions that violate the IID assumptions underlying large-scale self-supervised recipes \cite{bischof2024multimodal,li2024multimodal,cao2022beyond}. When trained with monolithic self-supervised learning objectives, which is standard in uni-modality success \cite{huang2023visual,lu2024visual} (Fig.\ref{fig:challenge}-a), these heterogeneous signals induce conflicting gradients during back-propagation. This optimization conflict will interfere the pretraining, forcing the network to collapse toward modality-dominant shortcuts, e.g., distinguishing X-ray from MRI based on noise patterns rather than semantics \cite{geirhos2020shortcut,javaloy2022mitigating}. The practical consequence is representation collapse that pretrained features become dominated by modality cues and limit the adaptation across medical scenarios (Fig.~\ref{fig:challenge}-b).

\begin{figure}
  \centering
  \includegraphics[width=\linewidth]{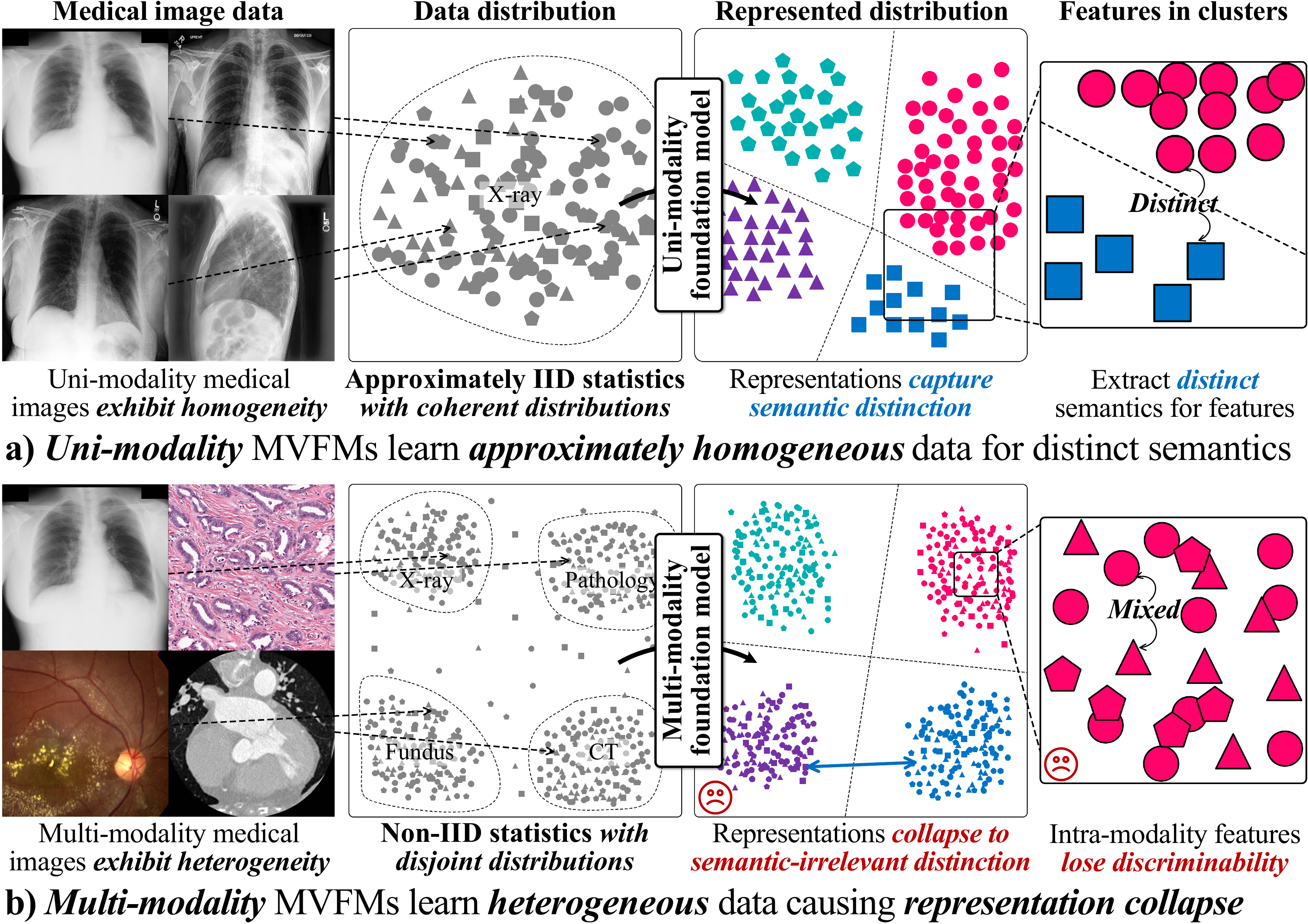}
  \caption{Heterogeneity in multi-modality MV data causes the Non-IID problem. a. Single modality data is approximately IID, enabling coherent representation learning. b. In multi-modality settings, pronounced heterogeneity produces disjoint feature distributions with Non-IID statistics. Monolithic training on such Non-IID data leads to gradient interference, causing modality-dominated representation collapse.}
  \label{fig:challenge}
\end{figure}

Mechanistically, the representation collapse in multi-modality MVFMs arises from the interference of Non-IID data statistics with the network's \textbf{emergent modularity} \cite{qiu2024unlocking,csordasneural,zhang2024inductive,zhang2023emergent,schaeffer2023emergent,pfeiffer2023modular,fang2025emoe}. Deep neural networks naturally tend to self-organize parameter subspaces into functionally specialized modules during optimization \cite{qiu2024unlocking,csordasneural}, a process that ideally disentangles complex data factors. However, in our multi-modality MVFMs, this self-organization is severely disrupted by the pronounced heterogeneity across imaging modalities. When trained under a monolithic objective, the conflicting gradients from disjoint feature distributions act as stochastic interference, preventing the formation of stable functional modules. Instead of settling into specialized modules, parameter subspaces are trapped in a stochastic tug-of-war driven by incompatible modality signals \cite{zhang2024inductive}. Consequently, the potential for modular specialization is suppressed, and the representation degenerates into modality-dominant shortcuts (Fig.~\ref{fig:challenge}-b). Overcoming this collapse requires regulating the optimization dynamics to shield the emergent modularity from such heterogeneity-induced interference \cite{pfeiffer2023modular,fang2025emoe}.

\begin{figure}
  \centering
  \includegraphics[width=\linewidth]{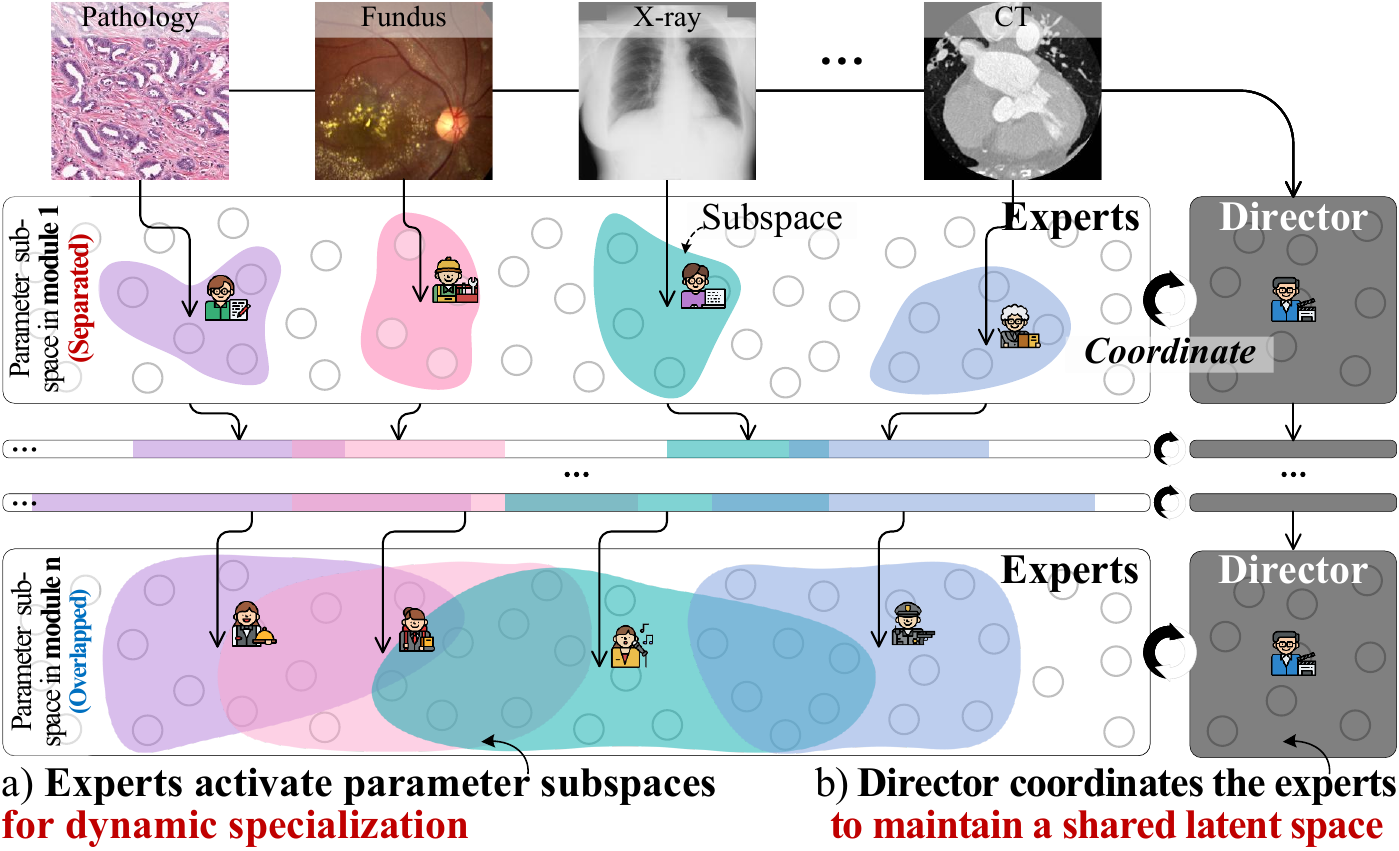}
  \caption{Our Director-Experts (DEX) modular network learns heterogeneous multi-modality MVFMs by explicitly regulating specialization and coordination. a. Multiple experts dynamically activate parameter subspaces to specialize modality-dominant statistics. b. A director coordinates the experts to learn a shared space to align knowledge. Hierarchical stacking of these DEX modules enables integration for the emergence of modular representations.}
  \label{fig:modeling}
\end{figure}

Unlocking this emergent modularity requires managing two complementary yet competing optimization dynamics, i.e., \textit{specialization} and \textit{coordination} \cite{csordasneural,zhang2024inductive}. \textbf{a. Specialization} directs parameter subspaces to focus on modality-dominant statistics, splitting conflicting updates and reducing interference. However, uncontrolled specialization will push subspaces toward isolated local optima, fragmenting representations and hindering semantic transfer across modalities \cite{csordasneural}. \textbf{b. Coordination} enforces models to learn shared feature structure among specialized subspaces, promoting information exchange and alignment, and enabling perceptual cues to progressively integrate into semantic features. However, excessive coordination will suppress necessary diversity, preventing subspaces from capturing modality-specific signals \cite{singh2022flava,radford2021learning,wang2021knowledge}. \textbf{\textit{Current Gap}:} Existing paradigms emphasize one dynamic at the expense of the other, following independent technical paths and lacking principled integration. For example, although Mixture-of-Experts (MoE) \cite{chopra2025medmoe,li2025uni} improves specialization, the its feature-/token-level routing misassigns local cues that are similar across modalities yet semantically distinct, fragmenting holistic imaging patterns and failing to achieve the modality-level specialization needed to decouple distinct imaging physics. In contrast, although self-distillation \cite{yang2024vitkd,li2024multimodal,wang2021knowledge} enhances coordination, its single-teacher implementations are unable to reconcile knowledge across multiple specialized subspaces into a shared representation. These paradigms have historically been pursued in isolation \cite{li2024multimodal,wang2021knowledge}, without a mechanism to jointly regulate specialization and coordination. As a result, under the extreme Non-IID condition in our multi-modality MVFMs, these isolated solutions fail, leaving this tension an open challenge.

In this paper, we propose the \textit{Director-Experts} (\textbf{DEX}) modular networks, which explicitly regulate the specialization-coordination dynamics to learn the emergent modular representation in multi-modality MVFMs under Non-IID conditions (Fig.~\ref{fig:modeling}). DEX reformulates the monolithic backbone into modular groups (termed DEX modules), decomposing learning into multiple parameter subspaces and their interactions via two key components. \textbf{a. Experts for autonomous specialization.} Each DEX module contains a pool of experts whose combinations are activated to form adaptive parameter subspaces. By routing inputs to expert combinations, DEX isolates conflicting gradients and lets subspaces absorb modality-dominant statistics. To implement modality-level routing, we design an image-wise expert activation strategy which enables routing entire images to the experts that match their global features relevant to modality, avoiding the misassignment of local cues in feature-/token-level routing \cite{chopra2025medmoe}. This acts as a modality buffer, mitigating destructive interference while remaining efficient for high-resolution medical data.
\textbf{b. Director for dynamic coordination.} To cope with the fragmentation from isolated specialization, our director module serves as a global semantic anchor across modalities. It is updated via our proposed group exponential moving average (GEMA), which dynamically tracks expert activation patterns and distills multi-expert knowledge into a shared representation space, avoiding the reliance on static averaging or single-teacher distillation \cite{yang2024vitkd,li2024multimodal,wang2021knowledge}. By imposing an alignment constraint that forces divergent experts to converge toward this shared space, the director promotes semantic integration across modalities. Stacking DEX modules hierarchically implements progressive representation evolution: shallow modules emphasize expert specialization to capture low-level perceptual features, while deeper modules strengthen director-mediated coordination to facilitate inter-modality communication and shared abstractions. Collectively, DEX balances specialization and coordination across the representation hierarchy, enabling the emergence of modular representations.

We pretrained our DEX on a novel large-scale multi-modality MV benchmark, Medical Vision Universe (MedVerse), and evaluated it with 26 downstream tasks, whose superior performance illustrates our great potential. Our MedVerse comprises over 4 million 2D (following \cite{mh2023lvm}) medical images spanning 10 diverse imaging modalities (see \textit{Appendix} for details). Under the clinical-modality definition, it provides the broadest coverage of distinct imaging modalities among publicly available 2D datasets for FM–level self-supervised pretraining. At a practical pretraining scale, DEX exhibits improved optimization dynamics and strong transferability. Extensive evaluations on 26 downstream tasks across multiple modalities demonstrate its powerful generalizability, indicating that DEX provides a modular representation foundation and represents an important enabling step toward general-purpose multi-modality medical AI. 

To the best of our knowledge, this is the first study on the problem of the Non-IID condition in multi-modality MVFMs, and proposes the DEX modular network that unifies heterogeneous modalities within an FM. Our contributions are summarized as follows: 1) We propose the DEX modular network, which regulates the optimization dynamics of specialization and coordination in our proposed experts and directer modules, thus enabling the learning of emergent modular representations for heterogeneous multi-modality medical images. 2) We propose the image-wise expert activation in our expert modules, enabling the selection of parameter subspaces matching images' global features. 3) We propose the group exponential moving average in our director module, enabling the distillation of multi-expert knowledge into a shared representation space. 4) We curate the MedVerse, a large-scale multi-modality MV dataset with over 4M 2D images spanning 10 modalities, enabling FM–level pretraining for diverse clinical tasks. 5) We construct a novel MVFM based on our DEX and MedVerse, improving the infrastructure to advance the MV research. Extensive experiments across 26 downstream tasks demonstrate our strong transferability, showing the advantages of emergent modular representations.

\section{Related Works}
\textbf{1) Medical vision foundation models (MVFMs).} MVFMs \cite{huang2023visual,lu2024visual,chen2024towards,wang2024pathology} leverage large-scale self-supervised pretraining to learn transferable representations for diverse downstream tasks, enhancing generalization and accelerating the development of medical AI \cite{he2024foundation}. Most existing models remain uni-modality (e.g., X-ray \cite{ma2025fully}, pathology \cite{huang2023visual}), capturing modality-specific semantics but struggling to generalize beyond pretraining distribution. This uni-modality constraint limits their utility in real clinical workflows that require integrated multi-modality data for decision-making. Recent multi-modality extensions \cite{ma2024segment,ren2024medical} show promising cross-modality capabilities, but rely heavily on curated paired data and remain task-specific rather than learning common representations.

\textbf{2) Multi-modality medical vision learning.} It integrates heterogeneous modalities (e.g., CT, X-ray) to achieve more comprehensive clinical understanding and superior predictive performance over uni-modality models \cite{taleb2021multimodal,warner2024multimodal,bayoudh2022survey}. Existing works fall into three categories: \textbf{a.} \textit{Image-level learning} fuses raw multi-modality images for joint modeling \cite{safari2023medfusiongan,li2025bsafusion}, but suffers from resolution and alignment inconsistencies. \textbf{b.} \textit{Feature-level learning} adopts modality-specific encoders and interaction modules, e.g., concatenation or attention, to mitigate heterogeneity and enhance cross-modality fusion \cite{warner2024multimodal,tang2022matr,feng2022multimodal}. However, such designs are task-specific, limiting their generalization to unseen modality combinations. \textbf{c.} \textit{Representation-level learning} unifies heterogeneous modalities into a shared representation space through large-scale pretraining \cite{bischof2024multimodal,he2024foundation}, enabling transferable representations. Current efforts rely on paired or annotated data \cite{chen2025mimo,Xue_2023_CVPR,ma2024segment,ren2024medical}, struggling under self-supervised objectives due to the Non-IID condition \cite{cao2022beyond}.

\textbf{3) Non-IID conditions in multi-modality medical vision.} The Non-IID conditions arise naturally in multi-modality MV, where diverse imaging modalities are generated under different imaging physics, spatial resolution, scale, and acquisition protocols \cite{cao2022beyond,bischof2024multimodal}. This results in disjoint feature distributions, violating the IID assumption commonly relied upon in FM pretraining paradigms \cite{he2024foundation,cao2022beyond}. Treating multi-modality data as IID risks overfitting to the modality-dominant distinctions, rather than semantically meaningful structures internal modalities \cite{chaudhuricloser}. Attempts to mitigate this challenge include domain adaptation \cite{guan2021domain}, distribution calibration \cite{verma2025calibration}, and federated learning \cite{zhu2021federated}, but these methods assume distribution overlap or label availability limited in MVFM practice. Therefore, the Non-IID conditions remain an open challenge.

\textbf{4) Emergent modularity and modular networks.} Emergent modularity refers to the phenomenon where neural networks spontaneously organize parameters into functionally specialized subspaces during training, without explicit constraints \cite{csordasneural,zhang2023emergent,pfeiffer2023modular}. This behavior has been observed across language models \cite{qiu2024unlocking,wang2025modular,schaeffer2023emergent}, spatial navigation \cite{zhang2024inductive}, and multi-modality tasks \cite{fang2025emoe}, where submodules specialize in capturing particular features or task components. Modular networks explicitly structure these subspaces to regulate specialization and inter-module coordination, which can be grouped into several categories: \textbf{a.} \textit{MoE architectures} \cite{chopra2025medmoe,li2025uni,6796382,NEURIPS2024_4a3a14b9} partition parameters into multiple experts to enable specialization; \textbf{b.} \textit{routing or shared-unit architectures} \cite{wang2021codinet,misra2016cross} adaptively route inputs or share subspaces across tasks or modalities. While these works capture partial benefits of emergent modularity, they regulate independent specialization or coordination, and control across hierarchical layers remains largely unexplored. 

\textbf{5) Optimization dynamics of specialization and coordination.} Two complementary dynamics internal networks have been explored for the Non-IID conditions in multi-modality MV. \textbf{a.} \textit{Specialization} separates modality-specific information into dedicated modules (e.g., MoE \cite{chopra2025medmoe,li2025uni}, dedicated encoders \cite{zong2024self,li2024multimodal}, adaptive routing \cite{wang2021codinet}), preserving modality-specific semantics but fragmenting shared knowledge. \textbf{b.} \textit{Coordination} enforces a shared latent space via contrastive learning \cite{radford2021learning,singh2022flava}, cross-attention \cite{shi2022xmorpher}, or optimal transport \cite{chenplot}, enabling inter-modality interaction. However, coordination is typically applied at fixed representation levels \cite{almudevaraligning} or single-teacher implementations, limiting flexibility across semantic granularities and multi-module settings. 

\section{Director-Experts Modular Network}
\begin{figure*}
  \centering
  \includegraphics[width=0.84\linewidth]{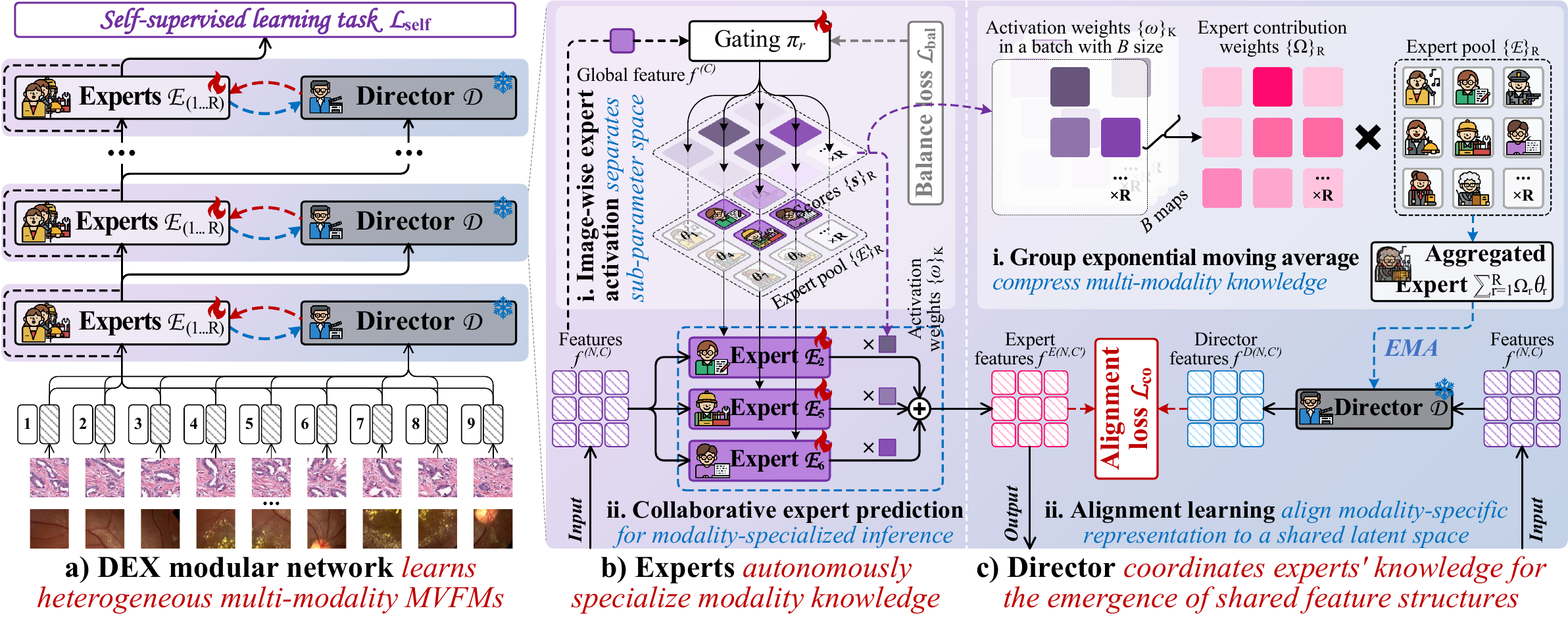}
  \caption{\textbf{Framework:} DEX modular networks regulate heterogeneous multi-modality MV representations within networks. a. The DEX network stacks hierarchical modules to enable specialization and coordination, progressively integrating modality-specific features into shared abstractions. b–c. Each DEX module comprises multiple experts and one director, which autonomously specialize parameter subspaces and coordinate them for the emergence of modular representations.}
  \label{fig:framework}
\end{figure*}
Our DEX (Fig.\ref{fig:framework}) reformulates the backbone as a modular network (Sec.\ref{subsec:network}), enabling the learning of emergent modular representation in multiple modular groups (our DEX module). It combines multiple expert modules (Sec.\ref{subsec:actors}) and a director module (Sec.\ref{subsec:director}) to balance specialization-coordination dynamics, enabling the self-organization of multi-modality knowledge.

\subsection{Problem Formulation}

The proposed DEX reformulates an FM network as a modular network where optimization dynamics are organized to unify multi-modality representations. Formally, the entire network $\mathcal{F}$ consists of $L$ stacked modules $\{\mathcal{M}_1,\dots,\mathcal{M}_L\}$, where each $\mathcal{M}_l$ module performs two dynamics: specialization and coordination. Given input features $f$, the representation in each module is expressed as
\begin{equation}
    \begin{matrix}\hat{f}=\Phi_{\text{sp}}(f), \quad\Phi_{\text{co}}(\hat{f},\Theta)\end{matrix},
\end{equation}
where $\Phi_{\text{sp}}(\cdot)$ denotes a specialization dynamic that captures modality-specific features $\hat{f}$ within parameter subspaces, and $\Phi_{\text{co}}(\cdot,\cdot)$ represents a coordination dynamic that aligns these specialized features to a shared space $\Theta$ to learn the shared feature structure. 

Our proposed DEX comprises multiple \textit{experts} $\mathcal{E}$ with a gating function to achieve the specialization $\Phi_{\text{sp}}$ and one \textit{director} $\mathcal{D}$ to model a shared latent space $\Theta$ thus driving the coordination $\Phi_{\text{co}}(\cdot,\cdot)$. Given an input $f$ from a preceding layer,
\begin{equation}
    \begin{matrix}\hat{f}^{E}=\Phi_{\text{sp}}(f)=\sum_{r=1}^{R}\pi_{r}(f)\mathcal{E}_{r}(f)\end{matrix},
\end{equation}
where $\pi_{r}\in\{\pi\}_{R}$ is a learnable gating function controlling the activation of experts that specialize in specific modalities like MoE \cite{chopra2025medmoe}. The Export output $\hat{f}^{E}$ captures modality-specific features as the module output, achieving the specialization. To further regulate the coordination, the director projects the $f$ to a shared latent space for $\hat{f}^{D}=\mathcal{D}(f)$, and further aligns the $\hat{f}^{E}$ to the $\hat{f}^{D}$, thus driving the specialized features to a shared space, i.e.,
\begin{equation}
    \begin{matrix}\quad\Phi_{\text{co}}(\hat{f}^{E},\Theta^{D})=\min_{\theta}\mathbb{E}[\mathcal{L}_{\text{co}}(\hat{f}^{E},\hat{f}^{D})]\end{matrix},
\end{equation}
where the $\theta$ is the experts' activated parameter subspace, and the $\mathcal{L}_{\text{co}}$ is the alignment loss function aligning features.

The DEX jointly optimizes specialization and coordination by minimizing a self-supervised loss $\mathcal{L}_{\text{self}}$ as a global objective for representation learning in multiple experts, and the alignment loss $\mathcal{L}_{\text{co}}$ to coordinate expert outputs with the director \cite{colson2007overview}:
\begin{align}\label{equ:bi}
\begin{matrix}\min_{\theta}\end{matrix}\quad
&\begin{matrix}\mathbb{E}[\mathcal{L}_{\text{self}}(\hat{f}^{E}(\theta))+\mathcal{L}_{\text{co}}(\hat{f}^{E}(\theta),\hat{f}^{D}(\eta^*(\theta)))],\end{matrix}\notag\\
\text{s.t.}\quad 
& \begin{matrix}\eta^*(\theta)=\mathcal{G}_{\text{EMA}}(\theta,\eta),\end{matrix}
\end{align}
where $\theta$ and $\eta$ denote experts and director parameters, and $\mathcal{G}_{\text{EMA}}$ is our designed group exponential moving average (GEMA) algorithm that updates the director with knowledge aggregated from the experts for a shared latent space $\Theta$. This formulation enables the experts and director to co-evolve: the experts specialize through self-supervised and alignment learning, while the director consolidates their knowledge. In practice, the optimization dynamics in Equ.\ref{equ:bi} are efficiently executed in an alternating update process \cite{xiao2023alternating}, where the experts minimize the outer objective while the director is updated accordingly, enabling a joint optimization that self-organizes specialization–coordination dynamics and facilitates the emergence of modular representations. 

\subsection{Experts \textbf{\textit{specialize modality knowledge}}}\label{subsec:actors}
The experts $\mathcal{E}$ in each DEX module $\mathcal{M}$ autonomously activate parameter subspaces, specializing modality knowledge and avoiding uncontrollable interference from others (Fig.~\ref{fig:framework}-b). Specifically,

\textbf{Image-wise expert activation} Unlike prior works \cite{fedus2022switch}, it activates the experts based on the holistic modality information of the entire image rather than individual tokens, distributing different modality images to their specialized experts and reducing the computation in activation. Given a feature map with $N$ features and C dims $f^{(N,C)}$ from the preceding layer, the $l$-th DEX module $\mathcal{M}_{l}$ extracts a global feature $f^{(C)}$ via the class token \cite{dosovitskiy2020image} or global average pooling to represent the whole image. This global feature is then projected through a learnable activation matrix $\pi^{(C,R)}$ to produce activation scores $s^{(R)}=\text{softmax}(f^{(C)}\times \pi^{(C,R)})$ where $R$ is the number of experts in the expert pool. Experts with the top-$K$ highest scores are activated to predict the expert features. Following \cite{fedus2022switch}, we added a balance loss $\mathcal{L}_{\text{bal}}$ to avoid the activation collapse.

\textbf{Collaborative expert prediction} The $K$ activated experts collaboratively define a sub-parameter space, enabling specialized prediction over the feature map $f^{(N,C)}$. Each activated expert $\mathcal{E}_{k}$ processes the $f^{(N,C)}$ to capture the specific properties of the input for $\hat{f}^{E(N,C')}_{k}$ ($C'$ is the new dim). The predicted features from all $K$ activated experts are aggregated through a weighted summation based on the activation scores, yielding the output of the experts block for the overall expert features $\hat{f}^{A(N,C')}$, i.e.,
\begin{equation}
\hat{f}^{E(N,C')} = \begin{matrix}\sum_{k=1}^{K}\omega_{k}\mathcal{E}_{k}(f^{(N,C)})\end{matrix},
\label{eq:actor_prediction}
\end{equation}
where $\omega_{k}=\frac{s_{k}}{\sum_{k'=1}^{K} s_{k'}}\in\{\omega\}_K$ denotes the activation weight of expert $\mathcal{E}_{k}$ obtained from the activation scores $s^{(R)}$. The experts can be instantiated as a feed-forward block or a local attention operator like \cite{fedus2022switch}, ensuring efficient specialization across modalities.

\subsection{Director \textbf{\textit{coordinates shared feature structure}}}\label{subsec:director}
The director $\mathcal{D}$ is a coordinating module that aligns the output from the experts to a shared latent space, coordinating knowledge across modalities(Fig.\ref{fig:framework}-c) via two components:

\textbf{Group exponential moving average (GEMA)} The shared latent space in the director is derived through the compression of modality-specific representation learned by the experts via our GEMA algorithm. The director utilizes a block with the same structure as the expert. The top-$K$ activated experts contribute to the director update proportionally to their activation scores. Their normalized activation weights $\{\omega_k\}_{K}$ are accumulated and normalized to form expert contribution weights $\{\Omega\}_{R}$ within a batch $B$:
\begin{equation}
\begin{matrix}\Omega_{r}=\frac{\sum_{b=1}^{B}\omega_{b,r}}{\sum_{r'=1}^{R}\sum_{b=1}^{B}\omega_{b,r'}}. r=1,\dots,R.\end{matrix}
\end{equation}
The resulting $\Omega$ indicates the contribution of each expert in the batch, satisfying $\sum_{r=1}^{R}\Omega_{r}=1$. For parameters in the director $\eta$, a weighted average of them across all experts $\theta$ is computed by $\Omega$:
\begin{equation}
     \begin{matrix}\eta^{*}\leftarrow \mathcal{G}_{\text{EMA}}(\theta,\eta)=m\eta+(1-m)\sum_{r=1}^{R}\Omega_{r}\theta_r,\end{matrix}
\end{equation}
where $m$ is the momentum factor that gradually increases from 0.99 to 1.0 with a cosine rate in training \cite{chen2021empirical}. Our GEMA enables the director to gradually absorb and compress the modality-specific representations of the experts into a shared multi-modality latent space. By maintaining temporal consistency and smoothing cross-expert updates, GEMA stabilizes training and facilitates the emergence of shared feature structures across modalities within the DEX module.

\textbf{Alignment learning} The activated experts are optimized by the alignment loss $\mathcal{L}_{\text{co}}$ supervised by the director to coordinate their knowledge. The director produces the feature $f$ for $\hat{f}^{D}=\mathcal{D}(f)$, which indicates the shared knowledge in a shared space. The distance between the feature $\hat{f}^{E}$ from collaborative expert prediction and $\hat{f}^{D}$ from the director is measured and minimized to align the experts' representation to a shared space, enabling their coordination and discovery of the common knowledge. We utilize the cosine similarity as the loss $\mathcal{L}_{\text{co}}(\hat{f}^{E},\hat{f}^{D})=1-\frac{\hat{f}^{E}\hat{f}^{D}}{||\hat{f}^{E}||||\hat{f}^{D}||}$.

\begin{table*}
  \centering
  \caption{Comparison study shows our highest average performance across 26 downstream tasks covering 10 modalities. ``Avg.'' denotes the average score over all tasks. Colored cells mark the top-3 results within modalities, with \textcolor[HTML]{E9DAF3}{$\blacksquare$} highlighting the highest. The FMs marked with ``*'' are trained with paired text or labels, and the ``NI'' denotes the FMs pretrained on nature images.}\label{tab:comp}
  \resizebox{\linewidth}{!}
  {
  \begin{tabular}{l|c|ccc|cccc|cccc|ccccccccccccc}
    \textbf{Model}
    & \color{gray}-
    & MAE
    & DINOv2
    & DINOv3
    & PLIP*
    & CONCH*
    & USFM
    & RETFound
    & PanDerm
    & LVMMed
    & MedSAM*
    & MAE-MedVerse
    & \textbf{DEX}
    \\ 
    \textbf{Modality}
    & \color{gray}\textit{scratch}
    & NI
    & NI
    & NI
    & Path
    & Path
    & US
    & OCT
    & MM
    & MM
    & MM
    & MM
    & MM
    \\ 
    \textbf{Backbone}
    & \color{gray}ViT-B
    & ViT-B
    & ViT-B
    & ViT-B
    & ViT-B
    & ViT-B
    & ViT-B
    & ViT-L
    & ViT-B
    & ViT-B
    & ViT-B
    & ViT-B
    & ViT-B
    \\
    \textbf{Data size}
    & \color{gray}-
    & 1.28M
    & 120M
    & 1.7B
    & 208K
    & 1.17M
    & 2.19M
    & 736K
    & 2.15M
    & 1.3M
    & 1.57M
    & 4.97M
    & 4.97M
    \\
    \hline
    \textbf{CT} (2 tasks)
    & \color{gray}73.1
    & 80.8$_{+7.7}$
    & \cellcolor[HTML]{C4D2E7}83.4$_{+10.3}$
    & \cellcolor[HTML]{E9DAF3}\textbf{84.2$_{+11.1}$}
    & 79.2$_{+6.1}$
    & 82.5$_{+9.4}$
    & 77.4$_{+4.3}$
    & 77.9$_{+4.8}$
    & 83.1$_{+10.0}$
    & 81.7$_{+8.6}$
    & 78.2$_{+5.1}$
    & 80.4$_{+7.3}$
    & \cellcolor[HTML]{C4D2E7}83.9$_{+10.8}$
    \\
    \textbf{Endo} (2 tasks)
    & \color{gray}49.0
    & 62.2$_{+13.2}$
    & 60.8$_{+11.8}$
    & \cellcolor[HTML]{E9DAF3}\textbf{67.8$_{+18.8}$}
    & 55.9$_{+6.9}$
    & 62.2$_{+13.3}$
    & 59.5$_{+10.5}$
    & 59.5$_{+10.5}$
    & 61.9$_{+12.9}$
    & \cellcolor[HTML]{C4D2E7}64.5$_{+15.5}$
    & 64.2$_{+15.2}$
    & 59.7$_{+10.7}$
    & \cellcolor[HTML]{C4D2E7}65.0$_{+16.0}$
    \\
    \textbf{Fundus} (2 tasks)
    & \color{gray}63.2
    & \cellcolor[HTML]{C4D2E7}71.7$_{+8.5}$
    & 65.6$_{+2.4}$
    & 62.9$_{-0.3}$
    & 60.1$_{-3.1}$
    & 70.4$_{+7.2}$
    & 67.8$_{+4.6}$
    & 70.8$_{+7.6}$
    & 70.2$_{+7.0}$
    & 71.6$_{+8.4}$
    & \cellcolor[HTML]{C4D2E7}71.7$_{+8.5}$
    & \cellcolor[HTML]{C4D2E7}73.6$_{+10.4}$
    & \cellcolor[HTML]{E9DAF3}\textbf{75.7$_{+12.5}$}
    \\
    \textbf{MR} (2 tasks)
    & \color{gray}62.0
    & \cellcolor[HTML]{E9DAF3}\textbf{73.0$_{+11.0}$}
    & 72.6$_{+10.6}$
    & \cellcolor[HTML]{C4D2E7}72.8$_{+10.8}$
    & 70.9$_{+8.9}$
    & 70.1$_{+8.1}$
    & 67.7$_{+5.7}$
    & 69.8$_{+7.8}$
    & 72.1$_{+10.1}$
    & 68.6$_{+6.6}$
    & 65.3$_{+3.3}$
    & 65.1$_{+3.1}$
    & \cellcolor[HTML]{C4D2E7}72.9$_{+10.9}$
    \\
    \textbf{OCT} (2 tasks)
    & \color{gray}47.6
    & 76.4$_{+28.8}$
    & 80.9$_{+33.3}$
    & \cellcolor[HTML]{C4D2E7}85.0$_{+37.4}$
    & 68.6$_{+21.0}$
    & 80.2$_{+32.6}$
    & 73.5$_{+25.9}$
    & \cellcolor[HTML]{E9DAF3}\textbf{87.6$_{+40.0}$}
    & 81.0$_{+33.4}$
    & 80.9$_{+33.3}$
    & 67.8$_{+20.2}$
    & 79.7$_{+32.1}$
    & \cellcolor[HTML]{C4D2E7}87.5$_{+39.9}$
    \\
    \textbf{Path} (4 tasks)
    & \color{gray}47.1
    & 61.1$_{+14.0}$
    & 58.5$_{+11.4}$
    & 61.4$_{+14.3}$
    & 56.0$_{+8.9}$
    & \cellcolor[HTML]{C4D2E7}67.9$_{+20.8}$
    & 61.7$_{+14.6}$
    & 57.0$_{+9.9}$
    & 61.8$_{+14.7}$
    & \cellcolor[HTML]{C4D2E7}66.0$_{+18.9}$
    & 61.5$_{+14.4}$
    & 60.9$_{+13.8}$
    & \cellcolor[HTML]{E9DAF3}\textbf{69.9$_{+22.8}$}
    \\
    \textbf{X-ray} (6 tasks)
    & \color{gray}76.6
    & 84.0$_{+7.4}$
    & 84.1$_{+7.5}$
    & 84.7$_{+8.1}$
    & 83.1$_{+6.5}$
    & 84.4$_{+7.8}$
    & 83.3$_{+6.7}$
    & 83.6$_{+7.0}$
    & \cellcolor[HTML]{C4D2E7}84.9$_{+8.3}$
    & \cellcolor[HTML]{C4D2E7}84.9$_{+8.3}$
    & 83.8$_{+7.2}$
    & 84.1$_{+7.5}$
    & \cellcolor[HTML]{E9DAF3}\textbf{85.9$_{+9.3}$}
    \\
    \textbf{US} (3 tasks)
    & \color{gray}47.7
    & 79.7$_{+32.0}$
    & 76.5$_{+28.8}$
    & 80.9$_{+33.2}$
    & 56.1$_{+8.4}$
    & 77.6$_{+29.9}$
    & 70.1$_{+22.4}$
    & 75.0$_{+27.3}$
    & 77.2$_{+29.5}$
    & \cellcolor[HTML]{C4D2E7}80.1$_{+32.4}$
    & \cellcolor[HTML]{C4D2E7}81.1$_{+33.4}$
    & 77.2$_{+29.5}$
    & \cellcolor[HTML]{E9DAF3}\textbf{82.5$_{+34.8}$}
    \\
    \textbf{PET} (1 task)
    & \color{gray}7.6
    & 19.5$_{+11.9}$
    & 27.2$_{+19.6}$
    & 35.8$_{+28.2}$
    & 4.7$_{-2.9}$
    & 13.0$_{+5.4}$
    & 15.1$_{+7.5}$
    & \cellcolor[HTML]{C4D2E7}62.6$_{+55.0}$
    & \cellcolor[HTML]{E9DAF3}\textbf{65.0$_{+57.4}$}
    & 42.4$_{+34.8}$
    & \cellcolor[HTML]{C4D2E7}55.6$_{+48.0}$
    & 35.8$_{+28.2}$
    & 52.1$_{+44.5}$
    \\
    \textbf{Photo} (2 tasks)
    & \color{gray}72.8
    & 85.4$_{+12.6}$
    & 83.6$_{+10.8}$
    & \cellcolor[HTML]{E9DAF3}\textbf{88.1$_{+15.3}$}
    & 82.3$_{+9.5}$
    & \cellcolor[HTML]{C4D2E7}86.8$_{+14.0}$
    & 80.0$_{+7.2}$
    & 82.4$_{+9.6}$
    & 85.4$_{+12.6}$
    & 84.9$_{+12.1}$
    & 83.9$_{+11.1}$
    & 84.3$_{+11.5}$
    & \cellcolor[HTML]{C4D2E7}87.1$_{+14.3}$
    \\
    \hline
    \textbf{Avg.} (26 tasks)
    & \color{gray}59.0
    & 73.3$_{+14.3}$
    & 72.7$_{+13.7}$
    & 75.2$_{+16.2}$
    & 66.5$_{+7.5}$
    & 74.2$_{+15.2}$
    & 70.1$_{+11.1}$
    & 73.6$_{+14.6}$
    & \cellcolor[HTML]{C4D2E7}75.4$_{+16.4}$
    & \cellcolor[HTML]{C4D2E7}75.4$_{+16.4}$
    & 73.5$_{+14.5}$
    & 73.1$_{+14.1}$
    & \cellcolor[HTML]{E9DAF3}\textbf{78.4$_{+19.4}$}
  \end{tabular}
  }
\end{table*}

\subsection{DEX modular network architecture}\label{subsec:network}
The proposed DEX modular network hierarchically stacks DEX modules, enabling modules to progressively coordinate specialization–coordination dynamics along the representation evolution (Fig.\ref{fig:framework}-a). Following the Transformers \cite{dosovitskiy2020image,liu2021swin}, each input image is partitioned into patches with $p\times p$ size and linearly projected into patch embeddings. Fixed sine–cosine positional embeddings are added, forming the input tokens $f^{(N,C)}$, where $N$ is the token number and $C$ is the token dimension. These tokens are processed by our DEX module $\mathcal{M}$ to represent multi-modality knowledge. Stacking $L$ DEX modules constructs the full DEX modular network, enabling the self-organization across granularities. In practice, our DEX network inserts a self-attention layer before each DEX module to aggregate contextual information like the ViT \cite{dosovitskiy2020image}. 

Our DEX coordinates the modules within the network through three complementary losses. \textbf{1)} \textit{Alignment loss} $\mathcal{L}_{\text{co}}$ aligns the experts' representation into a shared latent space defined by the director, coordinating multi-modality knowledge for shared feature structures. \textbf{2)} \textit{Self-supervised learning loss} $\mathcal{L}_{\text{self}}$ (e.g., MoCo \cite{he2020momentum}, MAE \cite{he2022masked}, COVER \cite{He_2025_ICCV}) drives the network to learn generalizable representations. \textbf{3)} \textit{Balance loss} $\mathcal{L}_{\text{bal}}$ regulates the activation of experts to avoid their potential collapse into fixed modules following \cite{fedus2022switch}. The overall objective is
\begin{equation}
\begin{matrix}\mathcal{L}_{DEX}=\frac{\lambda_{\text{co}}}{L}\sum_{l=1}^{L}\alpha^{l}\mathcal{L}_{\text{co}}^{l}+\frac{\lambda_{\text{bal}}}{L}\sum_{l=1}^{L}\mathcal{L}_{\text{bal}}^{l}+\mathcal{L}_{\text{self}}\end{matrix},
\end{equation}
where $\lambda_{\text{bal}}$ and $\lambda_{\text{co}}$ denote the weights of the balance and alignment losses. We set $\lambda_{\text{bal}}=0.001$ to avoid over-regularization and $\lambda_{\text{align}}=0.1$ to prevent dominance over semantic learning. The layer-wise factor $\alpha^{l}=\frac{1}{L-(l-1)}$ progressively strengthens alignment in deeper layers, encouraging the network to form increasingly shared representations and gradually enlarge parameter subspaces overlap during evolution as described in Fig.~\ref{fig:modeling}.

\section{Experiment}
\subsection{Experiment protocol} This study has made sufficient experiments for a complete evaluation. Following is
 the overview protocol, and details are in \textit{Appendix}.
 
\textbf{1) MedVerse dataset for multi-modality MVFM pretraining} To facilitate large-scale multi-modality MVFM pretraining, we curate a new benchmark named \textbf{Medical Vision Universe (MedVerse)}. It comprises \textbf{4,973,080} high-quality 2D medical images collected from publicly available datasets and institutional repositories. For the data with 3D volumes, e.g., CT, we slice them into 2D images like \cite{mh2023lvm}. MedVerse spans \textbf{10} distinct imaging modalities, including X-ray, CT, OCT, PET, MR, Ultrasound (US), Fundus, Pathology (Path), Endoscopy (Endo), and Clinical photos. It covers a wide spectrum of anatomical regions and clinical scenarios, involving the chest, abdomen, head, and other frequently examined organs and tissues. Each image is carefully standardized in both format and resolution to ensure cross-modality consistency. By encompassing both radiological and non-radiological domains, MedVerse enables representation learning across diverse medical visual patterns and anatomical contexts, thereby providing an FM-level dataset foundation.

\textbf{2) Adaptation on 26 downstream tasks} The adaptation performance of pretrained MVFMs is evaluated on 26 downstream tasks that cover a wide spectrum of modalities, anatomical regions, and clinical objectives. These tasks span classification and segmentation, and involve 10 modalities consistent with our MedVerse dataset. They span over 19 organs or tissues, providing a comprehensive evaluation of model adaptability across diverse MV scenarios.

\textbf{3) Comparisons} This study benchmarks our DEX against 11 representative vision FMs from both general and medical domains, grouped into three categories. \textbf{a.} \textit{Natural Image FMs} (MAE \cite{he2022masked}, DINOv2 \cite{oquab2024dinov2}, DINOv3 \cite{simeoni2025dinov3}) that are pretrained on large natural image datasets, representing strong general-domain baselines for transfer evaluation. \textbf{b.} \textit{Uni-modality MVFMs} (PLIP \cite{huang2023visual}, CONCH \cite{lu2024visual}, USFM \cite{jiao2024usfm}, RETFound \cite{zhou2023foundation}) that are specialized for a specific imaging modality, serving as competitive domain-specific baselines. \textbf{c.} \textit{Multi-modality MVFMs} (PanDerm \cite{yan2025multimodal}, LVMMed \cite{mh2023lvm}, MedSAM \cite{ma2024segment}, MAE \cite{he2022masked} pretrained on our MedVerse) that are designed for multi-modality understanding, reflecting the state-of-the-art (SOTA) in multi-modality MVFMs. All models use the same backbone integration, training pipeline, and hyperparameter settings in downstream tasks for fair evaluation.

\textbf{4) Implementation and evaluation metrics} All experiments are implemented by PyTorch and optimized by AdamW with an initial learning rate of $10^{-4}$. The model is pretrained for 100 epochs on our MedVerse dataset with a 10-epoch warm-up phase and then fine-tuned on each downstream task. Pretraining is conducted on two NVIDIA H100 Tensor Core GPUs with a batch size of 224 per GPU, and an image size of $256\times 256$. We implement our DEX network with the same architecture as ViT-B \cite{dosovitskiy2020image} for broad technical compatibility and use the MLP layers as the experts and director. We utilize the MAE \cite{he2022masked} as the self-supervised learning loss $\mathcal{L}_{\text{self}}$. In downstream tasks, all segmentation tasks are implemented via the UNETR \cite{hatamizadeh2022unetr} and all classification tasks adopt a classification head appended to the backbone network. This study used the Dice similarity coefficient (DSC) for segmentation tasks and the F1 measure (F1) for classification tasks. We report the average performance (\%) of all tasks within each modality in Tab.\ref{tab:comp}, and detailed per-task results in a radar chart of Fig.\ref{fig:leida}.

\subsection{Comparison study}
Our DEX achieves the highest overall performance across 26 downstream tasks spanning 10 modalities (Tab.\ref{tab:comp}), demonstrating strong generalization across diverse MV domains with four key observations: \textbf{a.} Natural image FMs also enhance medical image performance, achieving over 10\% average improvement compared with the ``scratch''. Their large-scale pretraining on natural images provides universal visual priors that transfer effectively to medical domains. Notably, DINOv3, pretrained on 1.7 billion (B) images, ranks in the top-3 across five modalities, and even three of them are the highest. \textbf{b.} Uni-modality MVFMs exhibit strong gains in their source modalities and are able to cross-modality transfer. The CONCH and RETFound achieved top-3 performance on pathology and OCT (their source modalities), validating their domain-specific learning. They also generalize reasonably to related modalities (e.g., CONCH to Photo, RETFound to PET), due to shared low-level cues. \textbf{c.} Transfer limitations in uni-modality MVFMs stem from limited data diversity and source-modality quality. PLIP, trained on the smallest dataset (208K), shows the lowest Avg. (66.5\%). USFM performs poorly (70.1\%) due to the high noise and variability in US data, which restricts cross-modality generalization. \textbf{d.} Multi-modality MVFMs have superiority in the cross-modality generalization. These FMs take over the top-3 highest average scores, contributing 61.3\% of all top-3 scores across the modalities. Their exposure to diverse modalities fosters emergent cross-modal representations and strong transferability across medical domains. Notably, although MedSAM is trained with segmentation supervision, its performance is lower than LVMMed, despite comparable pretraining data, suggesting that strong supervision may bias the model toward region features, reducing its general-purpose ability.

\begin{figure}
  \centering
  \includegraphics[width=\linewidth]{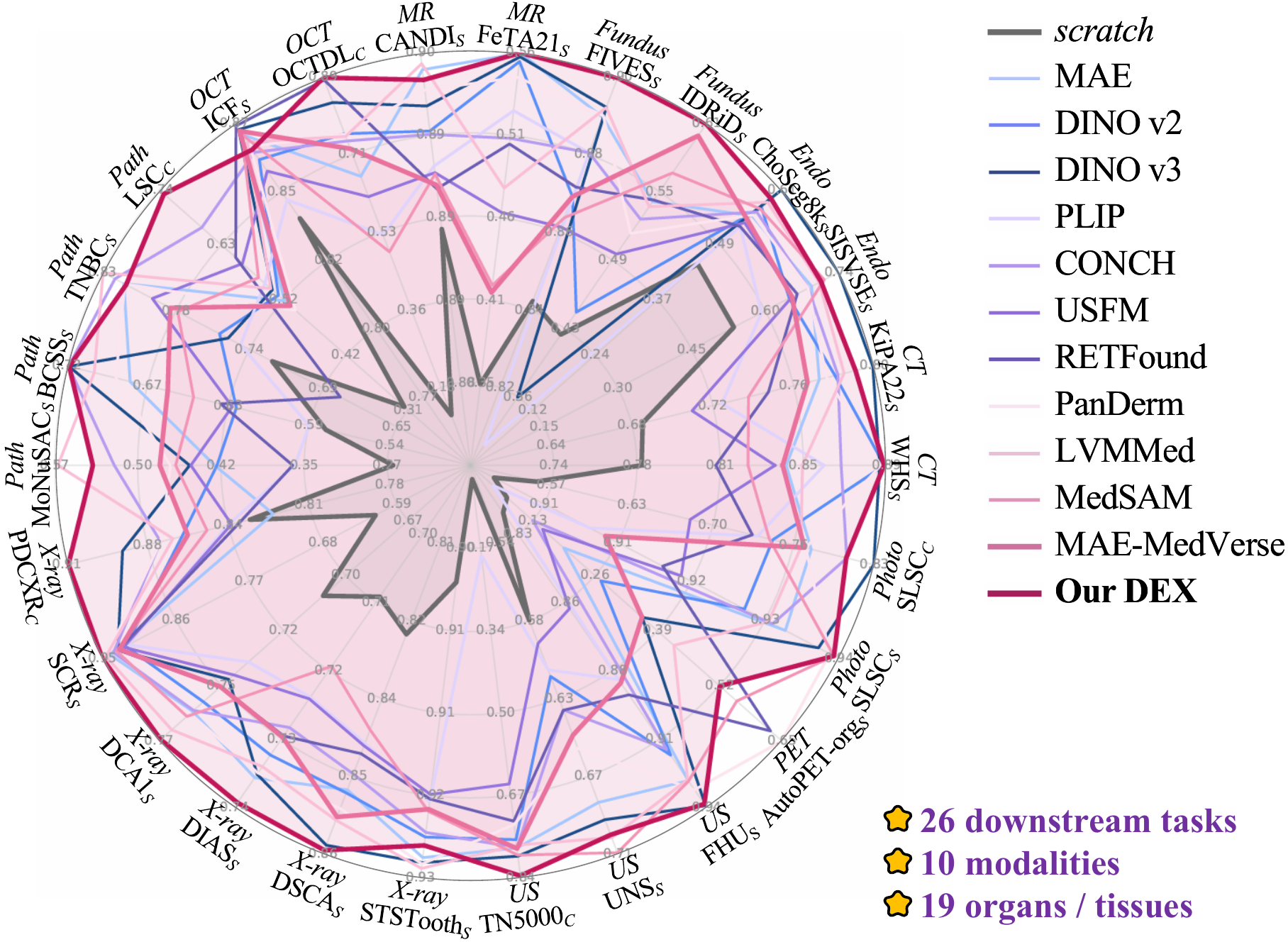}
  \caption{Our superiority in 26 downstream tasks across 10 modalities with 19 organs/tissues underscores its strong generalization ability across heterogeneous MV modalities.}
  \label{fig:leida}
\end{figure}
Compared with other multi-modality MVFMs, our DEX achieves the best overall performance, with two key observations: \textbf{a.} Strong generalization across modalities. DEX attains the highest average score (78.4\%), ranking within the top-3 across 9/10 modalities and first in four. This superior generalization arises from the model's explicit modeling of specialization–coordination dynamics, which enables the emergence of modular representations, improving adaptability in downstream tasks. \textbf{b.} Balanced and robust performance across tasks. DEX achieves the best results on 15 tasks and ranks within the top-3 across 22/26 tasks (Fig.\ref{fig:leida}). Although not the top performer on every individual task, DEX encloses the largest overall area, indicating balanced generalization across scenarios.
\begin{figure}
  \centering
  \includegraphics[width=\linewidth]{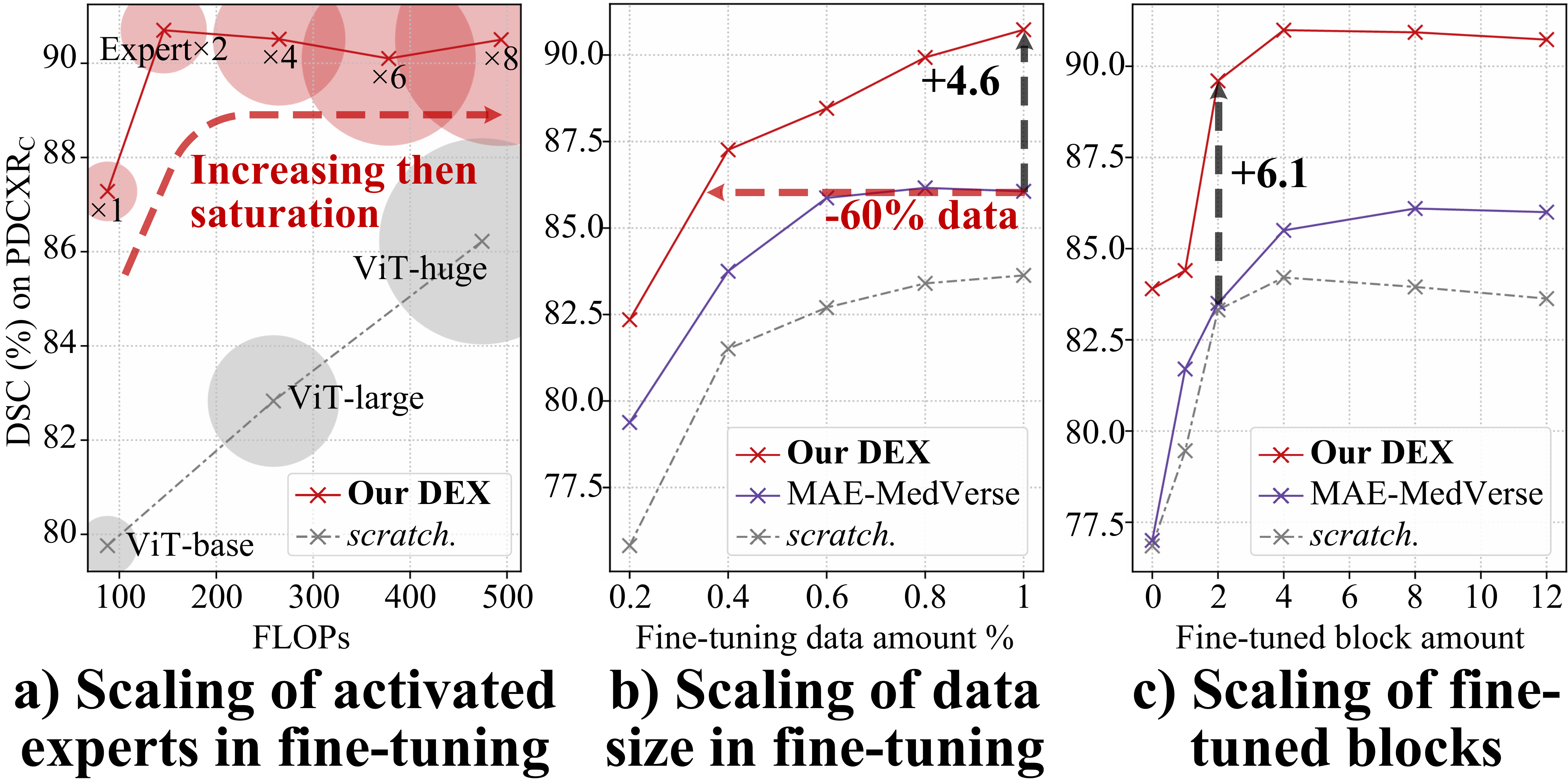}
  \caption{\textbf{Scaling analysis}. a) More activated experts in fine-tuning yield higher accuracy initially, followed by saturation. b) DEX achieves better data efficiency, outperforming scratch training with less fine-tuning data. c) DEX has more powerful performance with fewer tunable network blocks.}
  \label{fig:scale}
\end{figure}
\subsection{Ablation study and model analysis}
\textbf{1) Component ablation} The ablation study of our DEX on the PDCXR$_{C}$ (pneumonia classification (C) on X-ray) and TNBC$_{S}$ (nuclei segmentation (S) on pathology) shows the effectiveness of our experts and director in our model. We used ``scratch'' as the base and gradually added the MAE loss $\mathcal{L}_{\text{self}}$ on our MedVerse dataset, the experts for specialization, and the director for coordination. 
\begin{table}[H]
    \centering
    \resizebox{\linewidth}{!}
    {
    \begin{tabular}{cccccc}
        Modality
        & Task
        & \textit{scratch}
        & $\mathcal{L}_{\text{self}}$ (MedVerse)
        & + Experts
        & + Director
        \\
        \hline
        X-ray
        & PDCXR$_{C}$
        & 83.6
        & 86.1$_{+2.5}$
        & 88.4$_{+4.8}$
        & 90.7$_{+7.1}$
        \\
        Path
        & TNBC$_{S}$
        & 72.2
        & 78.4$_{+6.2}$
        & 78.5$_{+6.3}$
        & 81.1$_{+7.9}$
    \end{tabular}
    }
    \label{tab:placeholder}
\end{table}
\noindent Without any pretraining (``scratch''), the ViT-base backbone has 83.6\% F1 score and 72.2\% DSC on PDCXR$_{C}$ and TNBC$_{S}$. When incorporating pretraining on our MedVerse dataset, the performance improved by 2.5\% and 6.2\%, demonstrating that the rich modality knowledge contained in MedVerse effectively enhances downstream task performance. When adding our experts for specialization, it made the network a variant of a mixture of expert model \cite{fedus2022switch}, bringing 2.3\% improvement on PDCXR$_{C}$ owing to the specialized knowledge for X-ray images. However, it brings little gain on TNBC$_{S}$ since the severe domain gap prevents the activation from learning meaningful experts, causing the experts' specialization to collapse without proper coordination. When adding our Director, they serve as global coordinators that align experts' representations, effectively mitigating fragmented knowledge and substantially improving TNBC$_{S}$ and PDCXR$_{C}$ performance with 2.6\% and 2.3\%.
\begin{figure}
  \centering
  \includegraphics[width=\linewidth]{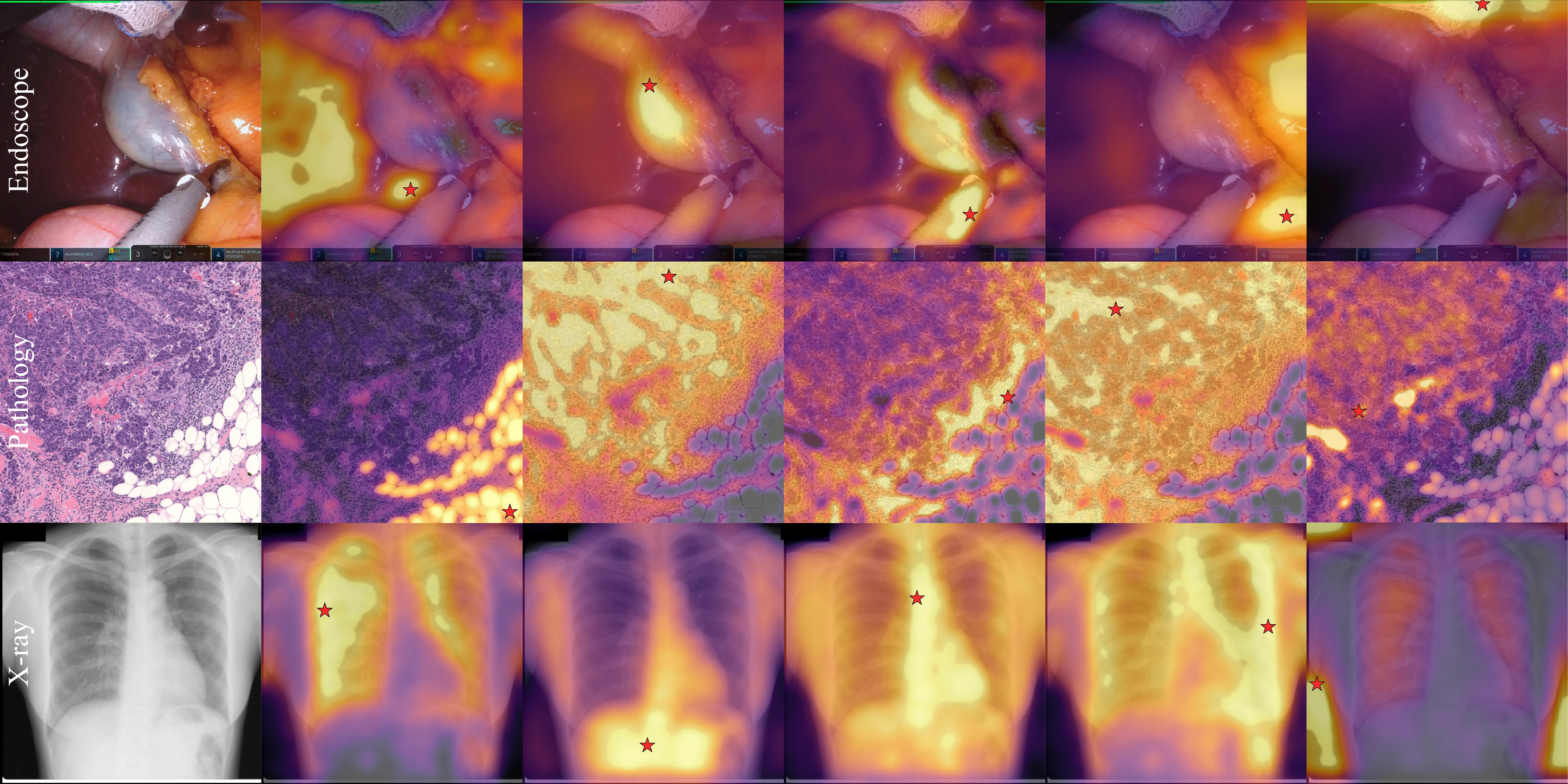}
  \caption{Visualization of attention maps from DEX. Each row denotes a modality, and each column shows the attention of a query patch ({\color{red}$\bigstar$}). DEX focuses on semantically meaningful regions, showing strong semantic decoupling ability.}
  \label{fig:heatmap}
\end{figure}

\textbf{2) Scaling analysis} This experiment analyzed three factors in the downstream fine-tuning (Fig.\ref{fig:scale}). \textbf{a)} \textit{Activated experts} ($K$) during fine-tuning: Increasing the number of activated experts leads to performance gains initially for DEX, which gradually saturates as $K$ grows. This trend confirms that in specialized tasks, although module integration is beneficial, larger integration does not bring equivalent gains due to the limitations in data amount and the singularity of the task. Although computational costs increase accordingly, ViT (our basic architecture) fails to achieve comparable performance even when scaled to a similar FLOPs. \textbf{b)} \textit{Fine-tuning data amount}: Our DEX exhibits strong data efficiency, achieving comparable performance to MAE-MedVerse using only 40\% of the fine-tuning data. Moreover, it surpasses both baselines by 4.6\% DSC with the full data, demonstrating superior generalization and transferability under less supervision. \textbf{c)} \textit{Fine-tuned blocks}: Our DEX network achieves a 6.1\% DSC improvement by fine-tuning only a small subset of DEX blocks, rapidly reaching its optimal performance. This indicates that DEX effectively leverages pretrained representations through localized adaptation, achieving stable optimization and efficient feature transfer compared to MAE-MedVerse.

\textbf{3) Attention analysis} We visualize the attention maps of the last-layer features in our DEX network to demonstrate its strong representational capability. Specifically, we select the feature of a patch and compute its attention map with all other features. To enhance visual clarity, we apply Gaussian smoothing for friendly visualization (Fig.\ref{fig:heatmap}). Our DEX exhibits clear and semantically consistent attention patterns across different visual modalities. In surgical scenes (top row), the attention maps accurately focus on key anatomical structures and surgical tools, indicating that the model effectively captures task-relevant spatial cues. In pathology images (middle row), DEX highlights discriminative tissue regions such as glandular boundaries and cellular clusters, suggesting that it learns fine-grained local representations critical for diagnosis. For X-ray images (bottom row), the attention is well aligned with major thoracic regions, such as the lungs and mediastinum, reflecting its capability to capture high-level semantic context. These visualizations demonstrate that DEX can attend to the semantically coherent regions, explaining the strong generalization performance observed in quantitative evaluations.

\textbf{4) Pattern layout of features} Figure \ref{fig:layout} visualizes the pattern layout learned within each modality by our DEX. Each block displays representative patches whose features are closely clustered in the learned latent space following \cite{zhouimage}. DEX is capable of discovering semantically consistent regions within modality: in endoscopic images, it groups patches depicting surgical instruments and fatty tissues; in pathology images, patches corresponding to lumen and stroma exhibit well-structured clustering patterns; and in CT and X-ray images, the model identifies coherent areas such as the lung, bone, and spine. These results indicate that DEX possesses strong intra-modal discriminability, that is able to organize local patterns into meaningful semantic layouts, thereby revealing its ability to capture semantic variations within each modality.
\begin{figure}
  \centering
  \includegraphics[width=\linewidth]{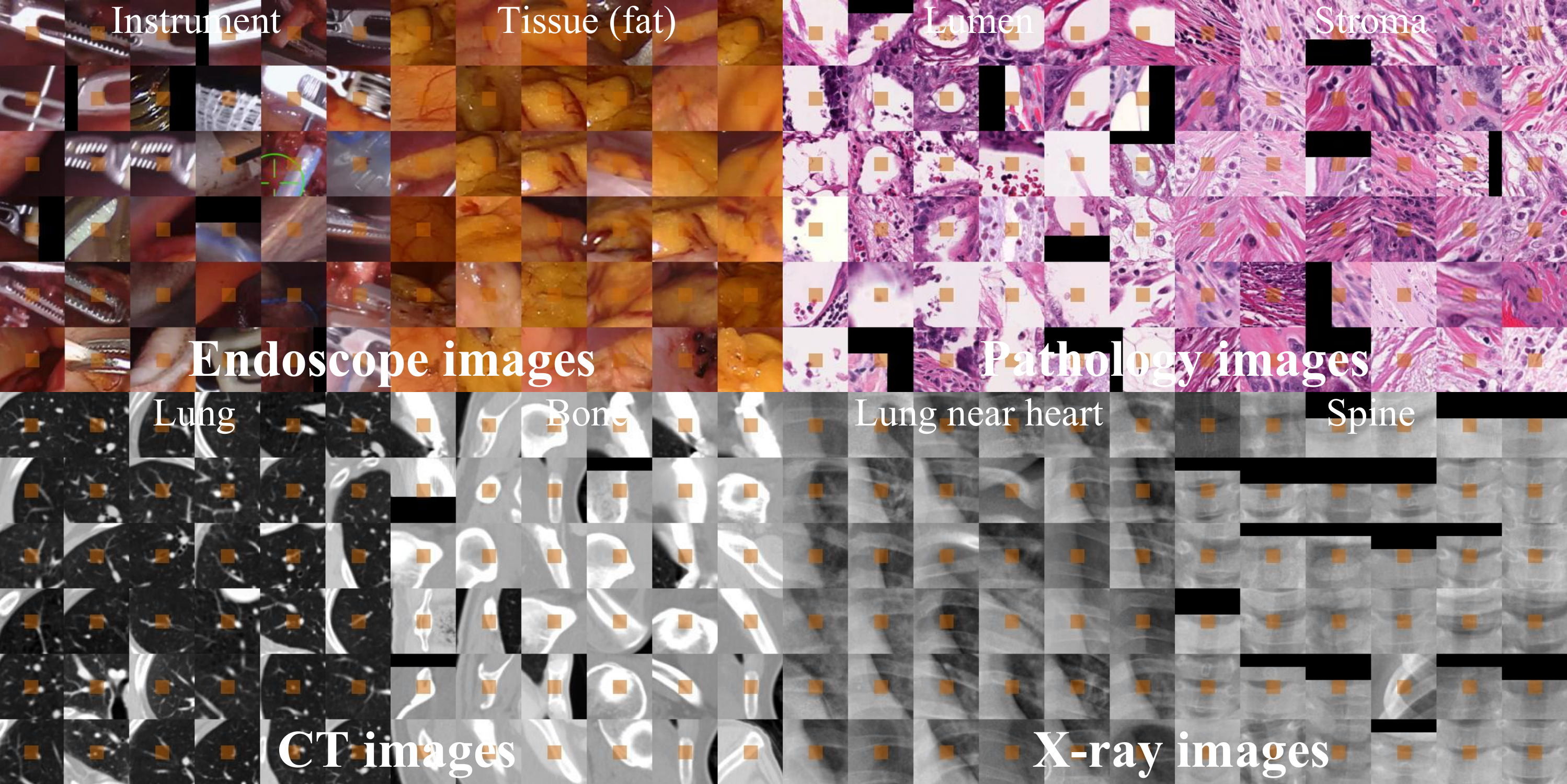}
  \caption{Visualization of intra-modality pattern layouts. Each block shows the patches that share similar features within modality. DEX clusters semantically related regions within each modality, demonstrating strong discriminability.}
  \label{fig:layout}
\end{figure}

\begin{figure}
  \centering
  \includegraphics[width=\linewidth]{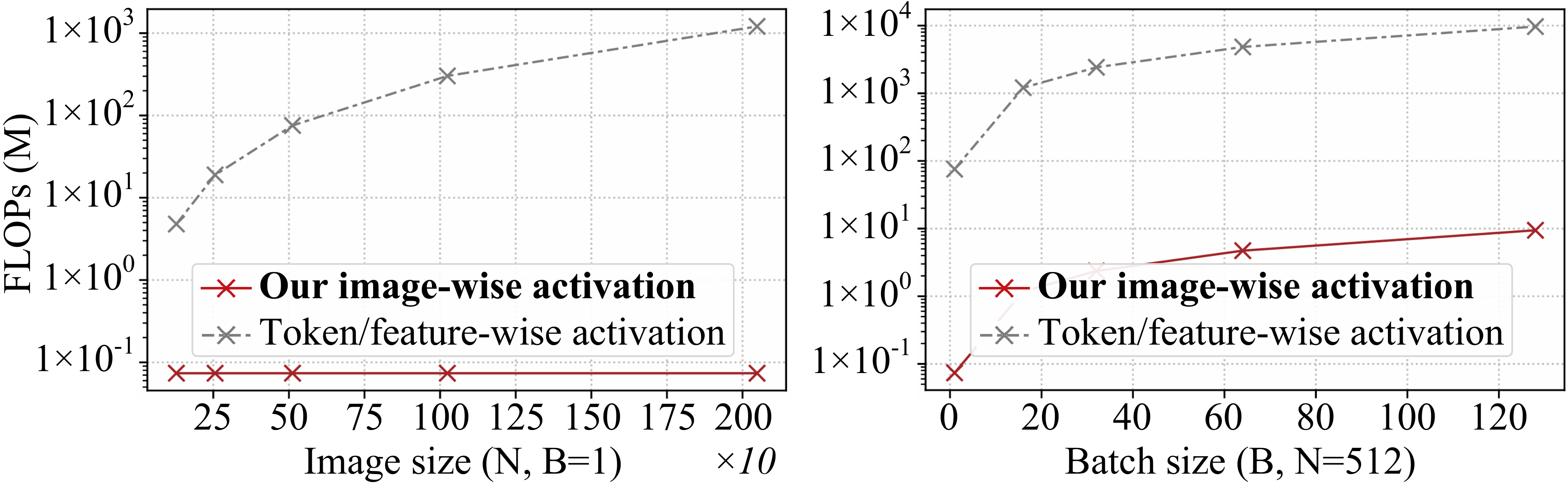}
  \caption{Computation cost of our image-wise activation. Across increasing image resolutions (left) and batch sizes (right), our image-wise activation incurs only a minimal, nearly constant overhead. In contrast, token/feature-wise activation grows rapidly due to its per-token dependency.}
  \label{fig:computation}
\end{figure}
\textbf{5) Computation Analysis of Image-wise \textit{v.s.} Token-wise Activation}
We analyzed the additional computation cost compared with ViT-base. We calculated the FLOPs of ViT-base $F_{\text{ViT}}$, our DEX based on our image-wise activation $F_{\text{DEX-I}}$, and our DEX based on token/feature-wise activation $F_{\text{DEX-T}}$. Then, we calculated their subtraction, i.e., $F_{\text{Image}}=F_{\text{DEX-I}}-F_{\text{ViT}}$, and $F_{\text{Token}}=F_{\text{DEX-T}}-F_{\text{ViT}}$. Our design exhibits two clear patterns (Fig.~\ref{fig:computation}): \textbf{a.} When increasing the input image resolution (left), token/feature-wise activation leads to a rapidly growing FLOPs overhead, increasing by several orders of magnitude as the number of tokens expands. In contrast, our image-wise activation introduces only a marginal and nearly constant cost, remaining close to the baseline ViT across all tested resolutions. \textbf{b.} When scaling the batch size (right), the overhead of token-wise activation again grows sharply, reflecting its per-token computation dependency. Our image-wise activation, however, scales gently and maintains a low computational footprint even under large-batch conditions. Together, these results demonstrate that the image-wise activation used in DEX is highly efficient and substantially more scalable than token-wise alternatives, making it suitable for high-resolution and large-batch medical imaging scenarios.
\begin{figure}
    \centering
    \includegraphics[width=\linewidth]{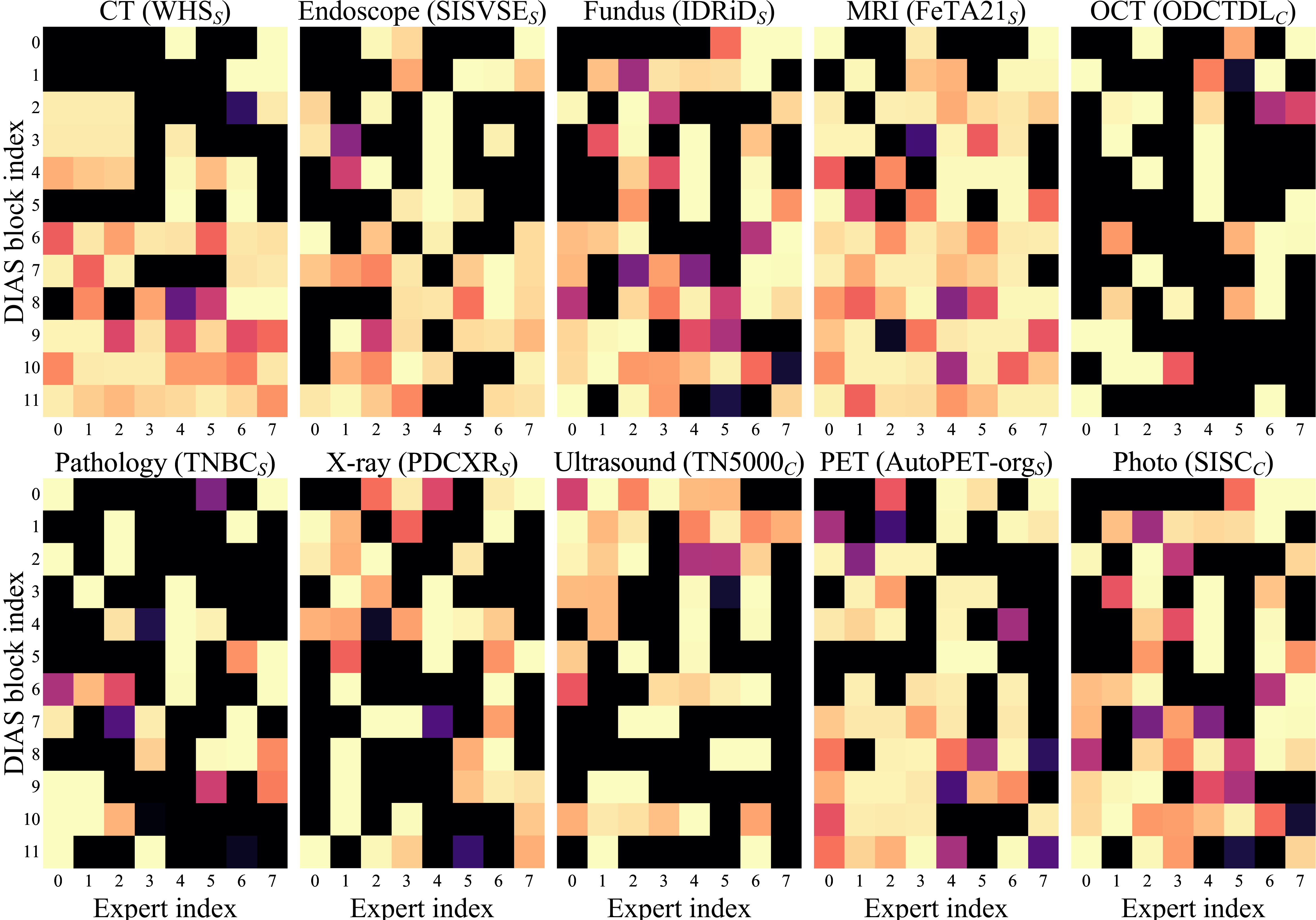}
    \caption{The visualization of the experts' activation frequency distribution in different modalities. }
    \label{fig:activation}
\end{figure}

\textbf{6) Experts Activation Analysis}
The activation frequency distribution of the experts across 10 modalities (Fig.\ref{fig:activation}) reveals several important patterns regarding how DEX dynamically allocates representation capacity: \textbf{a.} Modalities with strong structural regularities, e.g., CT, MRI, OCT, show concentrated activations on a small subset of experts, suggesting that DEX learns stable and reusable visual routines for these domains. \textbf{b.} Modalities with large appearance variability, e.g., Photo, Endoscope, trigger a more diverse set of experts, implying that DEX leverages a broader pool of experts to capture their heterogeneous texture and color distributions. \textbf{c.} Certain experts are consistently activated across multiple modalities, particularly those sharing low-level signals such as edge, contrast, or layered tissue patterns, e.g., Ultrasound and X-ray, OCT and Fundus. This indicates the emergence of modality-agnostic primitives that contribute to DEX's strong cross-domain generalization. \textbf{d.} Modalities with inherently noisy or complex acquisition characteristics, e.g., Ultrasound, PET, present more diffuse and less concentrated activation patterns, suggesting that DEX distributes representation load to mitigate noise sensitivity. Collectively, these findings demonstrate that DEX not only learns modality-specific experts but also discovers shared cross-modal experts, enabling flexible transfer across diverse medical imaging.

\section{Conclusion}
In this paper, we have proposed the DEX modular network for the Non-IID problem in multi-modality MVFMs. By reformulating FMs as modular networks, DEX achieves adaptive specialization and coordination across heterogeneous modalities, effectively mitigating representation collapse and enabling the emergence of modular representations. While its unique coordination properties exhibit strong performance across diverse medical modalities, an important future work is to extend the learning of emergent modular representations to high-dimensional, temporal, and cross-domain medical data for universal representation learning and involve more downstream evaluations. We believe that our DEX will promote the development of scalable, modality-unified MVFMs and inspire further research toward general-purpose intelligence.

\section*{Acknowledgment}
This work was supported in part by the National Institutes of Health (NIH) under Grants R01HL173186,  R01HL177813, and by the National Science Foundation (NSF) under Grant No. 2306545. This work used Jetstream2 through allocation CIS250356 from the Advanced Cyberinfrastructure Coordination Ecosystem: Services \& Support (ACCESS) program.

\bibliographystyle{ACM-Reference-Format}
\bibliography{sample-base}
\appendix
\clearpage
\section{Theoretical Foundation}
\subsection{Theoretical Modeling of the Non-IID Problem}
Multi-modality MVFM pretraining violates the IID assumption \cite{tishby2000information,cao2022beyond} due to heterogeneous imaging physics, protocols, and contexts \cite{yang2024vitkd,zong2024self,zhu2021federated}. This section provides a formal characterization of the Non-IID setting, its effect on self-supervised learning, and the resulting modality-dominant representation collapse.

\subsubsection{Preliminary 1: Multi-Modality Distributions}
Let $\mathcal{D}=\{1,\dots,D\}$ denote the set of imaging modalities, where each modality $d\in\mathcal{D}$ induces a distribution $P_d$ over input space $\mathcal{X}$. Uni-modality MVFM pretraining assumes IID samples $x_i\sim P$ drawn from a \emph{single} distribution $P$. Multi-modality pretraining samples from the mixture
\begin{equation}
    \begin{matrix}P_{\mathrm{mix}}=\sum_{d=1}^D \rho_d P_d, \rho_{d}\geq0, \sum_{d}\rho_{d}=1,\end{matrix}
\end{equation}
where $\rho_{d}$ is the sampling proportion of modality $d$. Each sample is effectively drawn as
\begin{equation}
d_i \sim \rho,\; x_i \sim P_{d_i},
\end{equation}
which violates the standard IID assumption that all $x_i\sim P$ for a single underlying $P$. This induces distribution shift across modalities, resulting in the \textit{Non-IID setting} that is characterized by pronounced inter-modality discrepancies and disjoint modality clusters in feature space \cite{ben2010theory}.

\begin{definition}[Non-IID Multi-Modality Setting]
A dataset is \emph{Non-IID} if its samples are drawn from a mixture of distributions $\{P_d\}_{d=1}^D$ satisfying $\exists d\neq d',\;P_d\neq P_{d'}$.
\end{definition}

\subsubsection{Preliminary 2: Self-supervised Learning in Non-IID setting}
Let $f_\theta:\mathcal{X}\rightarrow\mathbb{R}^d$ be a feature extractor, and let $\mathcal{T}$ denote the self-supervised pretext transformations. A self-supervised objective is formulated as
\begin{equation}
    \begin{matrix}\mathcal{L}_{\mathrm{self}}=\mathbb{E}_{x\sim P_{\mathrm{mix}}}[\ell(f_\theta(x),\,\mathcal{T}(x))] \\
    =\sum_{d=1}^D \rho_d \mathbb{E}_{x\sim P_d}[\ell(f_\theta(x), \mathcal{T}(x))]\end{matrix},
    \label{eq:ssl_mixture}
\end{equation}
where the loss $\ell$ induces a pretext task in the self-supervised learning (SSL). Standard SSL assumes IID sampling from a single $P$ \cite{tishby2000information}, which is violated under Non-IID sampling. 

\begin{definition}[Feature Decomposition]
\label{def:sem_mod}
We decompose the learned representation $f_{\theta}(x)$ as an arbitrary layer as a non-linear combination of a modality-invariant semantic component and a modality-specific component:
\begin{equation}
    f_\theta(x)= g_\theta^{\mathrm{sem}}(x)\oplus g_\theta^{\mathrm{mod}}(x,d),
\end{equation}
where $g_\theta^{\mathrm{sem}}(x)$ captures modality-invariant semantics, and $g_\theta^{\mathrm{mod}}(x,d)$ captures modality-specific biases, and $\oplus$ denotes integration.
\end{definition}

\subsubsection{Representation Collapse as Degenerate Optimum}
\begin{theorem}[Semantic Attenuation and Modality Collapse under Non-IID SSL]\label{thm:collapse}
Let $\triangle_{\mathrm{mod}}(\mathcal{F})$ denote the intrinsic distribution discrepancy across modalities in the hypothesis class $\mathcal{F}$. Without explicit control, i.e., optimizing $\mathcal{L}_{\mathrm{self}}$ only, as the modality discrepancy $\triangle_{\mathrm{mod}}(\mathcal{F})$ increases, the optimizer tends to prioritize modality-specific features. The relative magnitudes of the decomposed features will be:
\begin{equation}
    \frac{\|g^{\mathrm{mod}}_{\theta^*}\|}{\|g^{\mathrm{sem}}_{\theta^*}\|}\rightarrow \mathcal{O}(\triangle_{\mathrm{mod}}(\mathcal{F}))
\end{equation}
This semantic attenuation means that the network over-relies on $\mathcal{T}(x)$'s modality-dependent statistics for quick loss minimization, leading to representation collapse where effective generalization across modalities is lost. \end{theorem}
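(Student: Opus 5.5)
The statement is informal, so I would first fix a tractable surrogate model and prove the scaling claim inside it. The representation is taken additively, $f_\theta(x)=g^{\mathrm{sem}}_\theta(x)+g^{\mathrm{mod}}_\theta(x,d)$, so $\oplus$ is read as a direct sum of orthogonal subspaces. A shared norm budget $\|g^{\mathrm{sem}}_\theta\|^2+\|g^{\mathrm{mod}}_\theta\|^2\le B$ plays the role of finite capacity. The loss in Equ.~\ref{eq:ssl_mixture} is decomposed along these two components: by Definition~\ref{def:sem_mod}, the pretext target $\mathcal{T}(x)$ splits into a part predictable from semantics and a part predictable from modality identity. The discrepancy $\triangle_{\mathrm{mod}}(\mathcal{F})$ is instantiated as an $\mathcal{F}$-divergence, e.g.\ $\sup_{h\in\mathcal{F}}|\mathbb{E}_{P_d}h-\mathbb{E}_{P_{d'}}h|$ maximized over $d\neq d'$ in the sense of \cite{ben2010theory}. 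This quantity controls how much of the variance of $\mathcal{T}(x)$ under $P_{\mathrm{mix}}$ is explained by the modality label.

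\textbf{Step 1: variance decomposition.} Apply the law of total variance to the mixture. The between-modality part $\sum_d\rho_d\|\mathbb{E}_{P_d}\mathcal{T}-\mathbb{E}_{P_{\mathrm{mix}}}\mathcal{T}\|^2$ is bounded above and below by constant multiples of $\triangle_{\mathrm{mod}}(\mathcal{F})^2$, assuming $\mathcal{T}$ is represented by $\mathcal{F}$. The within-modality semantic part is a fixed quantity $\sigma_{\mathrm{sem}}^2$ that does not depend on the discrepancy.

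\textbf{Step 2: loss reduction per unit of norm.} For a squared or cosine-type $\ell$, linearize around the optimum, or treat a linear predictor exactly. The loss reduction from allocating norm $a$ to the modality channel is about $\triangle_{\mathrm{mod}}\cdot a$, and from allocating norm $b$ to the semantic channel about $\sigma_{\mathrm{sem}}\cdot b$. Equivalently, the gradient magnitudes at initialization scale with these explained variances. This is a ridge or constrained least-squares problem whose solution allocates norm to each channel in proportion to its signal strength.

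\textbf{Step 3: the KKT ratio.} Solve the constrained problem via its KKT conditions. This gives $\|g^{\mathrm{mod}}_{\theta^*}\|/\|g^{\mathrm{sem}}_{\theta^*}\|=\Theta(\triangle_{\mathrm{mod}}(\mathcal{F})/\sigma_{\mathrm{sem}})$, which is $\mathcal{O}(\triangle_{\mathrm{mod}})$ with $\sigma_{\mathrm{sem}}$ held fixed. As a complementary dynamical statement, I would show that under gradient flow the modality channel is learned first: its effective learning rate is larger by the same factor. This gives the ``quick loss minimization'' and shortcut interpretation, \`a la \cite{geirhos2020shortcut}.

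\textbf{Step 4: collapse.} When the ratio grows, the normalized representation $f/\|f\|$ concentrates on $g^{\mathrm{mod}}$. Any downstream predictor that relies on cosine similarity, or that is Lipschitz in the normalized features, therefore loses semantic margin at rate $1/\triangle_{\mathrm{mod}}$. This is the ``collapse'' conclusion.

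I expect the main obstacle to be Step 1–2: rigorously linking the abstract $\mathcal{F}$-discrepancy to the signal strength that the SSL loss actually rewards. The link is only clean under the additive, linear surrogate, together with the assumption that $\mathcal{T}$ carries modality statistics, e.g.\ that MAE pixel targets encode intensity distributions. I would state these as explicit assumptions. I would then prove the bound first for the linear model with squared loss, and treat the general nonlinear $\oplus$ only through a local (NTK-style) linearization.
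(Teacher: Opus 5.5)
Your route is genuinely different from the paper's. The paper's proof is only a qualitative sketch. It argues that $g^{\mathrm{mod}}_\theta$ lowers each local risk $\mathbb{E}_{x\sim P_d}[\cdot]$ in Equ.~\eqref{eq:ssl_mixture} independently, as a low-effort shortcut. By contrast, $g^{\mathrm{sem}}_\theta$ must satisfy one consistent objective across all heterogeneous $P_d$. The paper simply asserts that this difference in ``gradient efficiency'' amplifies the former and suppresses the latter. It never defines $\triangle_{\mathrm{mod}}(\mathcal{F})$ and never derives any scaling in it.

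You instead fix an explicit surrogate: an orthogonal additive split, a norm budget, and an $\mathcal{F}$-divergence for $\triangle_{\mathrm{mod}}$. You then apply the law of total variance to the mixture and solve the constrained least-squares problem. Its KKT conditions give $a=s_{\mathrm{mod}}/(1+\mu)$ and $b=s_{\mathrm{sem}}/(1+\mu)$. So the ratio equals the signal ratio exactly and is independent of the budget; the budget only governs the absolute attenuation of $g^{\mathrm{sem}}$.

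What your approach buys:
\begin{itemize}
\item a checkable statement whose assumptions are exposed;
\item a two-sided $\Theta(\triangle_{\mathrm{mod}}/\sigma_{\mathrm{sem}})$, which is what the collapse reading actually needs, since the literal $\mathcal{O}(\cdot)$ upper bound would not by itself signal collapse;
\item a concrete meaning of ``collapse'' through normalized features.
\end{itemize}

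What it gives up is the paper's mechanism. With $\sigma_{\mathrm{sem}}$ held fixed, your semantic channel pays no price for heterogeneity. The ratio grows only because you assume the pretext target's between-modality variance scales like $\triangle_{\mathrm{mod}}^2$. The cross-modality consistency cost that the paper's narrative invokes (conflicting gradients, which its modular design is built around) is absent from your model, so your result is close to a restatement of that modeling assumption.

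Two things to tighten.
\begin{enumerate}
\item Membership of $\mathcal{T}$ in $\mathcal{F}$ yields only the upper bound in Step~1. The lower bound also needs two conditions: the modality part of $\mathcal{T}$ must nearly attain the supremum defining $\triangle_{\mathrm{mod}}(\mathcal{F})$, and the $\rho_d$ must be bounded away from zero. Otherwise a large discrepancy on a rare modality contributes only about $\rho_d\triangle_{\mathrm{mod}}^2$. Make this the explicit alignment assumption you allude to at the end.
\item The ``learned first'' remark depends on the parameterization. Representation-space gradient flow on the squared loss gives $a(t)=s_{\mathrm{mod}}(1-e^{-ct})$ and $b(t)=s_{\mathrm{sem}}(1-e^{-ct})$. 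These are equal relaxation rates, with only the absolute velocity differing. A genuine ordering, with learning time roughly inversely proportional to signal strength, requires a factored parameterization such as a two-layer linear network.
\end{enumerate}
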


\begin{proof}[Proof Sketch]
In the Non-IID setting, minimizing the mixed objective $\mathcal{L}_{\mathrm{self}}$ (Equ.\eqref{eq:ssl_mixture}) is easier by finding shortcuts that reduce local modality risks $\mathbb{E}_{x\sim P_{d}}[\dots]$ independently. The modality-specific component $g^{\mathrm{mod}}_{\theta}$ offers a high-variance, low-effort path to minimize local $\mathcal{L}_{\mathrm{self}}$ terms. In contrast, the modality-invariant component $g^{\mathrm{sem}}_{\theta}$ must satisfy a consistent objective across all highly heterogeneous $P_{d}$, requiring significantly more optimization effort. This difference in gradient efficiency causes $g^{\mathrm{mod}}_{\theta}$ to be amplified and $g^{\mathrm{sem}}_{\theta}$ to be suppressed, leading to the collapsed representation dominated by modality biases.
\end{proof}

\subsection{Modular Networks for Non-IID Problem}\label{sec:iar_noniid}
Our Director-Experts (DEX) modular network, is designed to explicitly regulate the feature decomposition $f_{\theta}(x)=g^{\mathrm{sem}}_{\theta}(x)\oplus g_\theta^{\mathrm{mod}}(x,d)$ during optimization, thus counteracting semantic attenuation.

\begin{definition}[Director-Experts (DEX)]\label{def:iar}
Our DEX models a network $F$ as a hierarchical modular network $F=M_L \circ M_{L-1} \circ \dots \circ M_1$, where for an input $f_{l-1}$, each module $M_l$ ($l=1,\dots,L$) performs:
\begin{enumerate}
    \item \emph{\textbf{Specialization}} $\Phi^{l}_{\mathrm{spec}}$: Multiple expert modules dynamically partition the parameter space to learn modality-specific features $\hat{f}_{l}$.
    \item  \emph{\textbf{Coordination}} $\Phi^{l}_{\mathrm{co}}$: A director module coordinates experts' knowledge by aligning their output $\hat{f}_{l}$ to a shared latent space $\mathcal{D}(f_{l-1})$
\end{enumerate}
Excluding the other auxiliary losses, DEX is optimized by a self-supervised learning loss and an alignment loss:
\begin{equation}
    \mathcal{L}_{\mathrm{DEX}}=\mathcal{L}_{\mathrm{self}}+\frac{\lambda_{\mathrm{co}}}{L}\sum^{L}_{l=1}\alpha^{l}\mathcal{L}^{l}_{\mathrm{co}},
\end{equation}
where $\lambda_{\mathrm{align}}$ is the alignment weight and $\alpha^l = \frac{1}{L-(l-1)}$ is the layer-wise factor.
\end{definition}

\subsubsection{The Regularization Effect of Alignment Loss \texorpdfstring{$\mathcal{L}_{\mathrm{align}}$}{}}
The director module $\mathcal{D}$ maintains a representation $\hat{f}^D$ that serves as an aggregation of multi-modality knowledge via the Group Exponential Moving Average (GEMA) $\eta^*(\theta)=\mathcal{G}_{\mathrm{EMA}}(\theta,\eta)$. $\hat{f}^D$ thus models the stable, shared (semantic) component $g_\theta^{\mathrm{sem}}(x)$ across modalities. The alignment loss, based on cosine similarity, actively forces the expert's output $\hat{f}^E$ towards $\hat{f}^D$:
\begin{equation}
    \mathcal{L}_{\mathrm{align}}(\hat{f}^E,\hat{f}^D)=1-\frac{\hat{f}^{E}\hat{f}^{D}}{\|\hat{f}^{E}\|\|\hat{f}^{D}\|}
\end{equation}

\begin{lemma}[DEX Imposes Upper Bounds on Modality Residuals] \label{lem:iar_reg} The alignment loss $\mathcal{L}_{\mathrm{co}}$ effectively penalizes the modality-specific residual $g^{\mathrm{mod}}_{\theta}$. Specifically, the expected norm of the modality residual at layer $l$, $\mathbb{E}[\|g^{\mathrm{mod}}_{l}\|]$, is bounded by:
\begin{equation}
    \mathbb{E}[\|g^{\mathrm{mod}}_{l}\|]\leq K_{l}=\mathcal{O}(\frac{L_{\mathrm{self}}}{\lambda_{\mathrm{co}}\alpha^{l} c_{l}}),
\end{equation}
where $L_{\mathrm{self}}$ is the Lipschitz constant of $\mathcal{L}_{\mathrm{self}}$, $c_{l}>0$ is a constant related to the alignment loss property.

The factor $\alpha^l$ ensures that deeper layers have a greater penalty on modality residuals, enforcing progressive contraction towards a common representation, consistent with representation evolution.
\end{lemma}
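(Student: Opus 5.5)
The plan is a first-order stationarity argument at a minimizer $\theta^*$ of $\mathcal{L}_{\mathrm{DEX}}$, carried out layer by layer. Write the expert output at layer $l$ through the decomposition of Definition~\ref{def:sem_mod} as $\hat f^E_l=g^{\mathrm{sem}}_l\oplus g^{\mathrm{mod}}_l$. Model the director output $\hat f^D_l$ as tracking the shared component $g^{\mathrm{sem}}_l$. This is the modelling assumption stated just before the lemma: the GEMA update $\eta^*=m\eta+(1-m)\sum_r\Omega_r\theta_r$ averages the experts across modalities. The modality-specific parts, activated with modality-dependent weights $\Omega_r$, therefore largely cancel or are damped, while the common part survives.

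Under this identification, the main local step is a \emph{growth condition} for the cosine loss. We want a constant $c_l>0$ with
\[
\mathcal{L}^l_{\mathrm{co}}(\hat f^E_l,\hat f^D_l)\;\ge\; c_l\,\|g^{\mathrm{mod}}_l\|
\]
in a neighbourhood of alignment, or at least a bound $\langle\nabla_{g^{\mathrm{mod}}}\mathcal{L}^l_{\mathrm{co}},\,g^{\mathrm{mod}}_l\rangle\ge c_l\|g^{\mathrm{mod}}_l\|$. To get it, treat $\oplus$ locally as an approximately orthogonal sum. Then
\[
1-\cos(\hat f^E_l,\hat f^D_l)=1-\frac{\|g^{\mathrm{sem}}_l\|}{\sqrt{\|g^{\mathrm{sem}}_l\|^2+\|g^{\mathrm{mod}}_l\|^2}}.
\]
This quantity is increasing in the residual ratio $\|g^{\mathrm{mod}}_l\|/\|g^{\mathrm{sem}}_l\|$. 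Since activations are normalized (LayerNorm in the ViT blocks), feature norms are bounded above and below. On that bounded region the expression admits a lower bound of the form $c_l\|g^{\mathrm{mod}}_l\|$, with $c_l$ depending on the norm bounds and on the curvature of the cosine loss. This is where ``a constant related to the alignment loss property'' enters.

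With these pieces, the bound follows from stationarity. At $\theta^*$ we have $\nabla\mathcal{L}_{\mathrm{self}}+\frac{\lambda_{\mathrm{co}}}{L}\sum_{l}\alpha^l\nabla\mathcal{L}^l_{\mathrm{co}}=0$. Take the directional derivative along $g^{\mathrm{mod}}_l/\|g^{\mathrm{mod}}_l\|$, which shrinks the layer-$l$ residual, and compare the two terms:
\begin{itemize}
\item the self-supervised term can push along this direction with magnitude at most its Lipschitz constant $L_{\mathrm{self}}$;
\item the alignment term resists with magnitude at least $\frac{\lambda_{\mathrm{co}}}{L}\alpha^l c_l$ whenever the residual is non-negligible.
\end{itemize}
Equivalently, compare $\mathcal{L}_{\mathrm{DEX}}(\theta^*)$ with the value at the perturbed point where $g^{\mathrm{mod}}_l$ is removed. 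Optimality, together with the Lipschitz bound on the increase of $\mathcal{L}_{\mathrm{self}}$ and the growth bound on the decrease of $\mathcal{L}^l_{\mathrm{co}}$, gives
\[
\frac{\lambda_{\mathrm{co}}\alpha^l c_l}{L}\,\|g^{\mathrm{mod}}_l\|\lesssim L_{\mathrm{self}}.
\]
Taking expectations over $x\sim P_{\mathrm{mix}}$ yields $\mathbb{E}\|g^{\mathrm{mod}}_l\|\le K_l=\mathcal{O}\!\left(L_{\mathrm{self}}/(\lambda_{\mathrm{co}}\alpha^l c_l)\right)$, with the factor $L$ absorbed into the $\mathcal{O}$. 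Because $\alpha^l=1/(L-l+1)$ increases with $l$, $K_l$ decreases with depth, which is the stated progressive contraction.

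The main obstacle is the growth condition. The cosine loss is flat to second order at alignment: it behaves like $\tfrac12(\|g^{\mathrm{mod}}\|/\|g^{\mathrm{sem}}\|)^2$ for small residuals, so a linear lower bound fails near zero. I would handle this in two regimes:
\begin{itemize}
\item \emph{Large residual:} work on the region $\|g^{\mathrm{mod}}_l\|\ge\epsilon$, where the linear bound holds with $c_l$ depending on $\epsilon$.
\item \emph{Small residual:} the bound holds trivially once $\epsilon$ is folded into $K_l$.
\end{itemize}
If this proves too lossy, I would settle for the quadratic version, which gives the same $\mathcal{O}(\cdot)$ form for $\mathbb{E}\|g^{\mathrm{mod}}_l\|^2$ and then for the expected norm by Jensen's inequality.

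A second, softer issue is justifying $\hat f^D_l\approx g^{\mathrm{sem}}_l$. I would state it as an explicit assumption inherited from the GEMA discussion. Any fixed misalignment $\delta_l$ between $\hat f^D_l$ and $g^{\mathrm{sem}}_l$ would then be carried as an additive term in $K_l$.
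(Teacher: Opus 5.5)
Your balancing step does not actually bound the residual. Under the linear growth condition $\langle\nabla_{g^{\mathrm{mod}}}\mathcal{L}^l_{\mathrm{co}},g^{\mathrm{mod}}_l\rangle\ge c_l\|g^{\mathrm{mod}}_l\|$, the two forces you compare at stationarity are both constants. The self-supervised term pushes with magnitude at most $L_{\mathrm{self}}$, the alignment term resists with magnitude at least $\lambda_{\mathrm{co}}\alpha^l c_l/L$, and neither depends on how large $g^{\mathrm{mod}}_l$ is. Stationarity therefore only yields $\lambda_{\mathrm{co}}\alpha^l c_l/L\le L_{\mathrm{self}}$ whenever $g^{\mathrm{mod}}_l\neq 0$. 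That is an exact-penalty dichotomy (either the residual vanishes, or the constants satisfy this inequality and the residual is unconstrained), not $\|g^{\mathrm{mod}}_l\|=\mathcal{O}(L_{\mathrm{self}}/(\lambda_{\mathrm{co}}\alpha^l c_l))$. The value-comparison version has the same defect. Deleting $g^{\mathrm{mod}}_l$ moves the layer-$l$ feature by $\|g^{\mathrm{mod}}_l\|$, so the Lipschitz bound on the increase of $\mathcal{L}_{\mathrm{self}}$ is $L_{\mathrm{self}}\|g^{\mathrm{mod}}_l\|$, not $L_{\mathrm{self}}$. Your displayed inequality should therefore read $\frac{\lambda_{\mathrm{co}}\alpha^l c_l}{L}\|g^{\mathrm{mod}}_l\|\le L_{\mathrm{self}}\|g^{\mathrm{mod}}_l\|$, and the residual cancels. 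The same dropped factor affects your fallback: quadratic growth pushed through the inequality as you wrote it bounds $\mathbb{E}\|g^{\mathrm{mod}}_l\|^2$ by $K_l$, and Jensen then gives only $\sqrt{K_l}$ for the expected norm, not the stated form. Your two-regime split does not help while $c_l$ is a fixed constant either, since on $\{\|g^{\mathrm{mod}}_l\|\ge\epsilon\}$ you again get only the dichotomy.

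The missing idea is that the restoring force must grow with the residual, and the second-order flatness you flagged as the main obstacle is exactly what supplies it. For unit vectors $1-\cos(u,v)=\frac12\|u-v\|^2$. In your orthogonal model this gives $\mathcal{L}^l_{\mathrm{co}}\approx\|g^{\mathrm{mod}}_l\|^2/(2\|g^{\mathrm{sem}}_l\|^2)$ with $\nabla_{g^{\mathrm{mod}}}\mathcal{L}^l_{\mathrm{co}}\approx g^{\mathrm{mod}}_l/\|g^{\mathrm{sem}}_l\|^2$. So the alignment loss acts as an $\ell_2$-type penalty with curvature $c_l\sim 1/\|g^{\mathrm{sem}}_l\|^2$. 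This holds only on a region where norms are controlled, because the cosine loss saturates and its gradient decays again once the residual dominates; that is where your norm bounds are genuinely needed. Stationarity along $g^{\mathrm{mod}}_l$ then gives $\frac{\lambda_{\mathrm{co}}\alpha^l}{L}c_l\|g^{\mathrm{mod}}_l\|\le L_{\mathrm{self}}$ pointwise. That is the claimed $K_l$ directly, and the expectation bound follows without any Jensen step. Your $\epsilon$-splitting can be salvaged only by letting $c_l(\epsilon)$ grow linearly in $\epsilon$ and placing $\epsilon$ at the threshold $\lambda_{\mathrm{co}}\alpha^l c_l(\epsilon)/L=L_{\mathrm{self}}$, which is this quadratic argument in disguise. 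This $\ell_2$-penalty reading is also all the paper offers. It gives no separate proof of the lemma, only the remark in the proof sketch of the subsequent theorem that $\mathcal{L}_{\mathrm{co}}$ acts as an ``$\ell_2$-norm-like penalty'' on the expert--director distance. When you write it out, two further points need handling. Treat $\hat f^D_l$ as fixed (stop-gradient) in the stationarity condition. Also absorb the gradients of the downstream terms $\mathcal{L}^{l'}_{\mathrm{co}}$, $l'>l$, into the effective Lipschitz constant at layer $l$, since they too act on the layer-$l$ features.
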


\subsubsection{Mitigation of Semantic Attenuation}
\begin{theorem}[DEX Mitigates Semantic Attenuation under Non-IID SSL] \label{thm:iar_mitigates}
By setting the $\mathcal{L}_{\mathrm{co}}$ weights $\lambda_{\mathrm{co}}\alpha^l$ sufficiently large in deeper layers, DEX effectively contracts the modality discrepancy, reducing the overall effective discrepancy $\Delta_{\mathrm{mod}}^{\mathrm{DEX}}(\mathcal{F})$ compared to the original $\Delta_{\mathrm{mod}}(\mathcal{F})$. As a result, any minimizer $f_\theta^\star$ of $\mathcal{L}_{\mathrm{DEX}}$ satisfies:
\begin{equation} 
\|g_{\theta^\star}^{\mathrm{sem}}\|\ge C_{\mathrm{sem}}\quad\text{and}\quad\|g_{\theta^\star}^{\mathrm{mod}}\|\le C_{\mathrm{mod}}
\end{equation}
where $C_{\mathrm{sem}}>0$ is a lower bound independent of $\Delta_{\mathrm{mod}}(\mathcal{F})$, and $C_{\mathrm{mod}}$ is an upper bound. IAR thus simultaneously prevents semantic attenuation by requiring the Director to maintain a strong shared latent space, and prevents modality-dominated collapse by bounding the residual component.
\end{theorem}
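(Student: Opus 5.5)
The plan is to combine Lemma~\ref{lem:iar_reg} with a first-order stationarity argument on $\mathcal{L}_{\mathrm{DEX}}$, treating the decomposition of Definition~\ref{def:sem_mod} as (approximately) orthogonal at each layer, so that $\|f_l\|^2\approx\|g^{\mathrm{sem}}_l\|^2+\|g^{\mathrm{mod}}_l\|^2$. The upper bound $\|g^{\mathrm{mod}}_{\theta^\star}\|\le C_{\mathrm{mod}}$ then comes directly from the lemma. At a minimizer $\theta^\star$, the gradient of $\mathcal{L}_{\mathrm{self}}$ along the modality direction has magnitude at most $L_{\mathrm{self}}$. The alignment term contributes a restoring gradient of magnitude at least $\lambda_{\mathrm{co}}\alpha^l c_l\|g^{\mathrm{mod}}_l\|$, by a local strong-convexity (curvature) property of the cosine loss around $\hat f^D$. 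Balancing the two gives $\|g^{\mathrm{mod}}_l\|\le K_l$, and I would set $C_{\mathrm{mod}}:=\max_l K_l$, or take $K_L$ at the output layer. Since $\alpha^l$ increases with $l$, choosing $\lambda_{\mathrm{co}}\alpha^l$ large in deep layers makes $K_l$ small there. I would define the effective discrepancy $\Delta^{\mathrm{DEX}}_{\mathrm{mod}}(\mathcal{F})$ as the discrepancy of the hypothesis class restricted to $\{\|g^{\mathrm{mod}}\|\le K_l\}$. A Lipschitz argument, in which the discrepancy between $P_d$ and $P_{d'}$ seen through $f$ is controlled by the spread of the modality component, then gives $\Delta^{\mathrm{DEX}}_{\mathrm{mod}}\le \mathcal{O}(\max_l K_l)\le\Delta_{\mathrm{mod}}$.

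For the lower bound $\|g^{\mathrm{sem}}_{\theta^\star}\|\ge C_{\mathrm{sem}}$, I would argue via the director. By the GEMA update $\eta^\star=m\eta+(1-m)\sum_r\Omega_r\theta_r$, the director is a convex combination of expert parameters weighted by batch-aggregated activations. Because the experts are routed by modality, these weights average over modalities. Assuming each expert's modality component is roughly mean-zero under the other modalities' activations, this averaging cancels the modality-specific parts to first order, so $\hat f^D$ lies near the semantic component.

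Next, the cosine alignment forces $\hat f^E$ to have direction close to $\hat f^D$. Combined with the upper bound on $g^{\mathrm{mod}}$, this means the expert output is dominated by its semantic part. It remains to rule out a trivial (zero or constant) semantic component. Here $\mathcal{L}_{\mathrm{self}}$ is used: the reconstruction or pretext loss cannot fall below its modality-only optimum unless the representation carries within-modality information. Since the modality component is capped at $C_{\mathrm{mod}}$, which is insufficient to reach the minimum, the minimizer must carry a semantic norm of at least some $C_{\mathrm{sem}}>0$. This $C_{\mathrm{sem}}$ is determined by the SSL loss gap and $L_{\mathrm{self}}$, so it is independent of $\Delta_{\mathrm{mod}}$. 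Finally, the ratio $\|g^{\mathrm{mod}}\|/\|g^{\mathrm{sem}}\|\le C_{\mathrm{mod}}/C_{\mathrm{sem}}$ no longer scales as in Theorem~\ref{thm:collapse}.

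I expect the main obstacle to be the lower bound, for two reasons. First, the cosine loss is scale-invariant, so it gives no norm control by itself. Second, cancellation of modality parts under GEMA averaging needs an explicit assumption: near-linearity of the MLP experts in parameters around the average, together with balanced routing, which the balance loss $\mathcal{L}_{\mathrm{bal}}$ supports. I would state these as assumptions, namely orthogonal decomposition, local curvature $c_l$, and a positive SSL gap for modality-only representations, and derive $C_{\mathrm{sem}}$ from that gap rather than from the alignment term.
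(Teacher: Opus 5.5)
Your plan matches the paper's sketch on the upper half. The paper also just invokes Lemma~\ref{lem:iar_reg} and lets the GEMA director carry the modality-invariant features, treating $\mathcal{L}_{\mathrm{co}}$ as an $\ell_2$-like penalty, and it gives no separate argument for $\|g^{\mathrm{sem}}\|\ge C_{\mathrm{sem}}$. You depart from it by noting that the cosine loss is scale-invariant and trying to get $C_{\mathrm{sem}}$ from $\mathcal{L}_{\mathrm{self}}$ instead. That idea is well motivated, but as written the lower-bound step does not go through. The same scale-invariance also undercuts your re-derivation of the upper bound.

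\emph{Lower bound.} Write $\mathcal{L}_{\mathrm{self}}(0)$ for the loss with the layer feature zeroed. With $L_{\mathrm{self}}$ as the Lipschitz constant, the triangle inequality gives only $\|g^{\mathrm{sem}}_{\theta^\star}\|\ge(\mathcal{L}_{\mathrm{self}}(0)-\mathcal{L}_{\mathrm{self}}(f^\star))/L_{\mathrm{self}}-C_{\mathrm{mod}}$. This is positive only if the minimizer achieves an SSL reduction exceeding $L_{\mathrm{self}}C_{\mathrm{mod}}$. So you need an \emph{upper} bound on $\mathcal{L}_{\mathrm{self}}(f^\star)$ that is uniform in $\lambda_{\mathrm{co}}\alpha^{l}$ (which the theorem makes large) and in $\Delta_{\mathrm{mod}}$.

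The problem is that $\theta^\star$ minimizes $\mathcal{L}_{\mathrm{DEX}}$, not $\mathcal{L}_{\mathrm{self}}$. Your phrase ``insufficient to reach the minimum'' silently assumes $\theta^\star$ still reaches the SSL minimum. The natural zero-alignment comparison point has all experts equal and the director equal to them, i.e.\ the monolithic network of Theorem~\ref{thm:collapse}. Comparing against it does bound $\mathcal{L}_{\mathrm{self}}(f^\star)$. But it also shows that for large $\lambda_{\mathrm{co}}$ the minimizer may sit at or near that collapsed configuration, where your GEMA cancellation fails, and with it $C_{\mathrm{mod}}$.

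The assumption you list instead is a positive SSL gap for modality-only representations, and it conflicts with Definition~\ref{def:sem_mod}. There $g^{\mathrm{mod}}(x,d)$ depends on $x$. So a fragmented representation that stores each modality's content in its own subspace has $g^{\mathrm{sem}}=0$ and can still attain the SSL optimum; this is exactly the over-specialization failure described in the introduction. What actually excludes fragmentation in your argument is therefore the cap $C_{\mathrm{mod}}$, i.e.\ the alignment term after all. The claimed independence of $C_{\mathrm{sem}}$ from $\Delta_{\mathrm{mod}}$ is asserted, not derived.

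\emph{Upper bound.} Suppose $\hat f^{D}$ is parallel to $g^{\mathrm{sem}}$ and $g^{\mathrm{mod}}\perp g^{\mathrm{sem}}$. Then $\|\partial(1-\cos)/\partial g^{\mathrm{mod}}\|=\|g^{\mathrm{sem}}\|\,\|g^{\mathrm{mod}}\|/\|\hat f^{E}\|^{3}$. This vanishes as $g^{\mathrm{sem}}\to 0$. It shrinks when $\hat f^{E}$ is rescaled. It also tends to zero as $\hat f^{E}$ becomes orthogonal to $\hat f^{D}$, where the loss saturates.

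Hence a uniform curvature $c_l>0$ is not a property of the cosine loss: it presupposes a lower bound on $\|g^{\mathrm{sem}}\|$ and an upper bound on $\|\hat f^{E}\|$. Re-deriving the lemma this way is circular with your lower bound, which consumes $C_{\mathrm{mod}}$. A first-order balance also cannot rule out modality-dominated stationary points.

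You have two ways out. One is to use Lemma~\ref{lem:iar_reg} as a black box, as the paper does. The other is to target what the cosine term actually controls: the angle, i.e.\ the ratio $\|g^{\mathrm{mod}}\|/\|g^{\mathrm{sem}}\|$ from Theorem~\ref{thm:collapse}. You would then add an explicit norm normalization of $\hat f^{E}$ to turn that ratio bound into absolute bounds.
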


\begin{proof}[Proof Sketch]
The expert modules use the self-supervised learning loss $\mathcal{L}_{\mathrm{self}}$ to learn high-variance, modality-specific features ($g_{\theta}^{\mathrm{mod}}$) while the Director, updated via the GEMA, learns the low-variance, modality-invariant features ($g_{\theta}^{\mathrm{sem}}$). The $\mathcal{L}_{\mathrm{align}}$ loss then acts as an explicit $\ell_2$-norm-like penalty on the distance between $\hat{f}^A$ and $\hat{f}^D$, effectively regularizing the magnitude of the $g_{\theta}^{\mathrm{mod}}$ (Lemma \ref{lem:iar_reg}), thus balancing the objective and ensuring that the representation is dominated by semantics, not modality-specific biases.
\end{proof}

\section{Technical Details}
\subsection{Balance Loss \texorpdfstring{$\mathcal{L}_{\mathrm{bal}}$}{}}
The balance loss $\mathcal{L}_{\mathrm{bal}}$ used in our DEX training will regulate the activation, avoiding the model collapsing to a fixed few experts and making specialization fail. Following \cite{fedus2022switch}, it is formulated as:
\begin{equation}
    \begin{matrix}\mathcal{L}_{\mathrm{bal}}(s^{(R)})=R\sum_{r=1}^{R} 
    (\frac{1}{B}\sum_{b=1}^{B} s_{b,r})
    (\frac{1}{B} \sum_{b=1}^{B}I_{b,r})\end{matrix},
    \label{eq:balance_loss}
\end{equation}
where $B$ is the batch size, $s_{b,r}$ denotes the soft activation scores of expert $r$ for image $b$, and $I_{b,r}\in\{0,1\}$ is the hard assignment indicator that equals $1$ if expert $r$ is selected for image $(b)$, otherwise $0$. This loss aligns the expected activation probability with their empirical activation frequencies, promoting balanced utilization of the experts \cite{fedus2022switch}.

\subsection{Adaptive Frequency-Aware Noise in Experts Activation}
During training, the module that selects Top-$K$ experts based on similarity logits may become biased toward a small subset of experts, especially at early stages when the activation scores are not yet well-formed. To alleviate this effect, we incorporate an adaptive frequency-aware noise mechanism that encourages a more balanced activation of experts without modifying the architecture or introducing additional learnable parameters. It has two components:

\textbf{1)} \textit{Momentum-based activation statistics.} After Top-$K$ activation, each image is associated with $K$ activated experts and corresponding activation weights $\{\omega\}_{K}$. We compute the per-batch activation distribution $\hat{c}=$ by summing these weights over all images and normalizing:
\begin{equation}
    \hat{c}_{r}=\frac{1}{\sum_{r'}\sum_{b}^{B}\omega_{b,r}+\varepsilon}\sum_{b=1}^{B}\omega_{b,r}, \quad r=1,\dots,R
\end{equation}
where $R$ is the number of experts, and $B$ is the batch size. This yields an activation distribution $\hat{c}=$ describing the relative contribution of each expert in the current batch. We maintain a stable long-term estimate using an exponential moving average:
\begin{equation}
    c\leftarrow\mu c+(1-\mu)\hat{c},
\end{equation}
where $\mu$ is the momentum, we set 0.99 in training.

\textbf{2)} \textit{Frequency-aware noise injection.} Before performing Top-$K$ activation, the activation scores $\{s\}_{R}$ are perturbed by expert-dependent noise:
\begin{equation}
    s^{b,r}=\mathrm{softmax(}(f_{b}\times\pi_{b,r})+\sigma(\epsilon_{b,r}+\delta_{r})), \quad\epsilon_{b,r}\sim\mathcal{N}(0,1),
\end{equation}
where $\sigma$ is the noise standard deviation, $\pi^{C}_{b,r}$ is the a learnable activation matrix, and $\delta$ is the frequency calculated from the activation distribution, i.e., $\delta_{r}=1-\frac{c_{r}}{\sum_{r'}^{R}c_{r'}+\varepsilon}$. It adjusts the competitiveness of each expert based on its long-term usage. Frequently activated experts (large $c_{r}$) receive weaker perturbations, while underused experts obtain stronger positive shifts, making them more likely to participate in the activation.

\section{Experiment Details}
\subsection{Datasets}
\subsubsection{Pretraining dataset: Medical Vision Universe (MedVerse)}
\begin{figure*}[htb]
  \centering
  \includegraphics[width=0.7\linewidth]{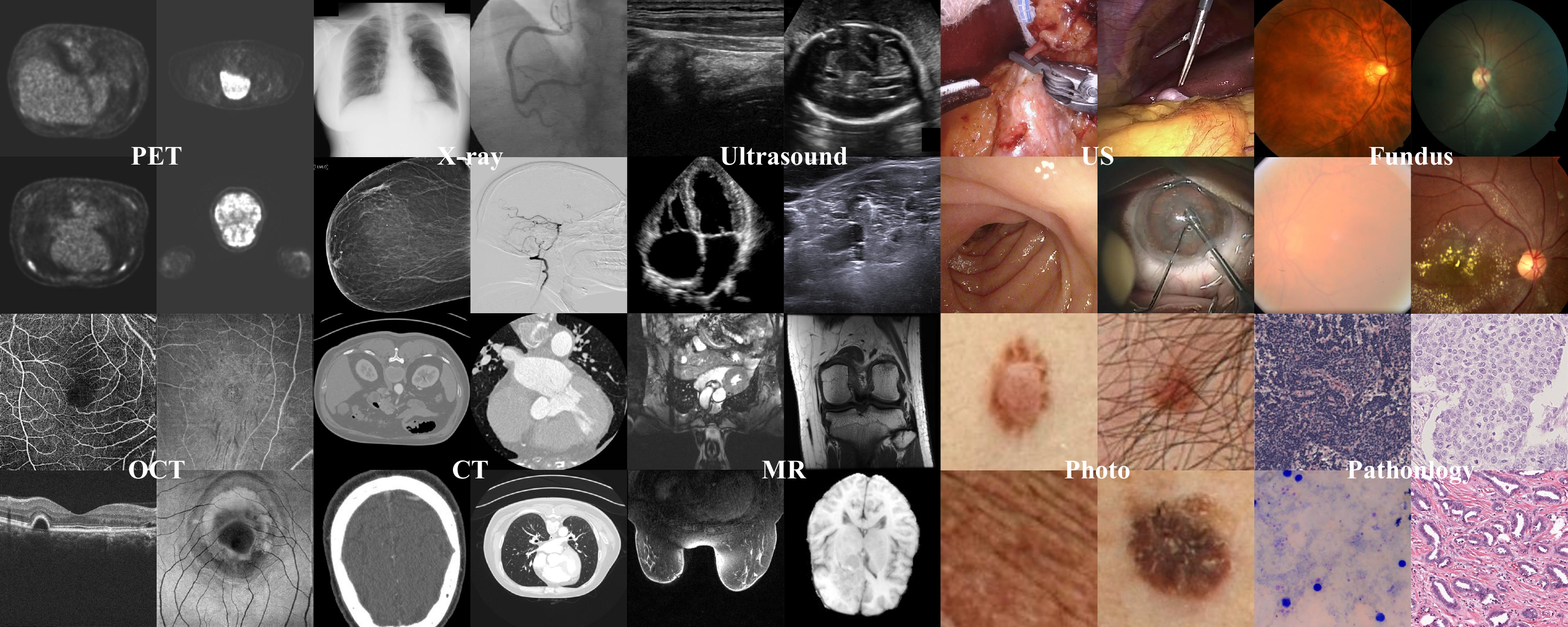}
  \caption{Examples in our MedVerse dataset. It has 10 modalities with diverse patterns and clinical targets, providing a large amount of medical image knowledge for FMs and enabling them to generalize to different tasks.}
  \label{fig:dataset}
\end{figure*}
Our curated MedVerse dataset is a large-scale multi-modality MV dataset collected from publicly available medical imaging datasets and aggregated to support training general-purpose MVFMs (Tab.\ref{tab:data_pre}). It contains more than 4.97 million medical images and slices across 10 modalities sourced from more than 40 well-established medical datasets, covering X-ray, CT, PET, MR, ultrasound, fundus, OCT, pathology, endoscopy, and clinical photography. It covers a broad spectrum of anatomical regions such as the chest, breast, bone, heart, brain, abdomen, eye, fetus, prostate, and skin. To improve the overall data integrity, we remove low-quality, corrupted, or abnormally small images, as well as highly repetitive or near-duplicate samples. This results in a cleaner, more diverse, and more representative corpus for pretraining.

\textbf{1) CT (821K):} The CT component of MedVerse comprises whole-body, cardiac, and brain CT datasets, including AutoPET-CT \cite{gatidis2024results}, ImageCAS \cite{zeng2023imagecas}, and TopBrain-CT \cite{yang2025benchmarking}, totaling more than 821K slices. To standardize voxel intensities across heterogeneous scanners and protocols, all CT volumes are intensity–normalized by clipping the Hounsfield Unit (HU) range to $[-200,400]$, which covers the majority of soft-tissue and organ structures relevant to downstream tasks. Following clipping, intensities are linearly rescaled to $[0,1]$ to ensure compatibility with other modalities in MedVerse. To further improve data quality, we discard slices with minimal anatomical content, defined as those with a foreground (non-air) pixel ratio below 0.05. This filtering step removes blank, near-empty, or scanner-padding slices commonly found at volume boundaries.

\textbf{2) PET (706K):} The PET portion of MedVerse from the AutoPET \cite{gatidis2024results} dataset comprising more than 706K slices. It includes both raw PET intensity volumes and standardized uptake value (SUV) volumes, enabling models to learn representations invariant to acquisition differences and thus improving cross-domain generalization. For raw PET data, we apply a three-stage normalization pipeline: (1) intensities are first clipped using the 1\% and 99\% percentile values to suppress extreme noise and scanner artifacts; (2) the clipped volumes are then standardized using z-score normalization; (3) the standardized values are finally mapped to $[0,1]$ via min–max normalization. For SUV volumes, we follow clinical conventions by clipping values to the range $[0,20]$, which captures typical physiological and pathological uptake levels, and subsequently linearly normalizing all values to $[0,1]$. To ensure anatomical relevance and remove non-informative slices from both modalities, we exclude slices in which the foreground ratio (non-air uptake) is below 0.01.

\textbf{3) MR (1.75M):} The MRI component of MedVerse consists of whole-body, cardiac, breast, abdominal, and brain MRI datasets, including Totalsegmenter-MR \cite{akinci2025totalsegmentator}, ACDC \cite{bernard2018deep}, MRNet \cite{bien2018deep}, Duke-Breast-Cancer-MRI \cite{saha2018machine}, Prostate-MRI-US-Biopsy \cite{fedorov2021nci,natarajan2020prostate}, TopBrain-MR \cite{yang2025benchmarking}, LLDMMR \cite{lou2025sdr}, and BraTS-GLI \cite{de20242024}, comprising more than 1.75M slices. To standardize voxel intensities across heterogeneous sources, all MRI volumes undergo intensity normalization by clipping each volume's intensity distribution using the 1\% and 99\% percentile values, which suppresses sequence-specific outliers and scanner-dependent artifacts. Following clipping, voxel intensities are linearly rescaled to $[0,1]$, facilitating consistent numerical ranges across different MRI contrasts and ensuring compatibility with other modalities in MedVerse. We discard slices with minimal anatomical content, defined as those with a foreground (non-air/tissue) pixel ratio below 0.05, thereby removing blank or non-informative slices typically appearing at volume boundaries.

\textbf{4) X-ray (468K):} The X-ray component of MedVerse consists of a diverse collection of large-scale public datasets, including ChestX-ray8 \cite{wang2017chestx}, CheXpert \cite{irvin2019chexpert}, Mammo-Bench \cite{bhole2025mammo}, MURA \cite{rajpurkar2017mura}, vzrad2 \cite{vzrad2_dataset}, ADSD \cite{danilov2021real}, ARCADE \cite{popov2024dataset}, and XACD \cite{ma2021self}, totaling more than 468K radiographs. These datasets span a wide range of anatomical regions, such as the chest, breast, bone, tooth, and coronary artery, and include both diagnostic and screening scenarios. All images are normalized by dividing pixel intensities by 255, ensuring consistent numerical ranges across sources.

\textbf{5) Ultrasound (68K):} The ultrasound component of MedVerse covers multiple anatomical regions, including cardiac, breast, thyroid, fetal, and peripheral nerve images, and aggregates both static images and cine sequences from public datasets such as EchoNet-Dynamic \cite{ouyang2019echonet,ouyang2020video} and related ultrasound benchmarks \cite{ThyroidUltrasoundCineclip,van2018automated,al2020dataset,ding2022mallesnet}, thus comprising more than 68K images. To make video-based data compatible with our 2D pretraining pipeline, we subsample ultrasound videos by storing one frame every 30 frames, which preserves the temporal diversity of cardiac motion while controlling redundancy and storage cost. All ultrasound images and sampled video frames are normalized by dividing pixel intensities by 255, resulting in values in the range $[0,1]$ and ensuring numerical consistency across datasets and acquisition devices. 

\textbf{6) Fundus (96K):} The fundus component of MedVerse includes a diverse set of ophthalmology datasets such as Refuge2 \cite{fang2022refuge2,orlando2020refuge}, MESSIDOR \cite{decenciere2014feedback}, APTOS2019 \cite{aptos2019}, and Eyepacs \cite{gulshan2016development}, comprising more than 96K images. These datasets provide high-resolution color fundus photographs captured under varying illumination conditions, camera systems, and clinical screening settings. To ensure consistent numerical ranges across heterogeneous sources, all fundus images are normalized by dividing RGB pixel intensities by 255, mapping them to the $[0,1]$ range while preserving color fidelity.

\textbf{7) OCT (110K):} The OCT component of MedVerse consists of retinal OCT datasets, including OCTA500 \cite{li2024octa}, ROSE \cite{ma2020rose}, and other public OCT benchmarks \cite{kermany2018identifying}, comprising more than 110K images. These datasets provide cross-sectional scans acquired under varying scanning protocols, resolutions, and device manufacturers. To standardize appearance across sources, all OCT images are normalized by dividing pixel intensities by 255, mapping values to the $[0,1]$ range while preserving relative contrast between retinal layers and lesions.

\textbf{8) Pathology (168K):} The pathology component of MedVerse includes cytology and histopathology datasets, covering the malaria cell datasets \cite{ljosa2012annotated,DVN/VEADSE_2023} and tissue microscopy dataset \cite{komura2023restaining}, thus comprising more than 168K images. These datasets contain high-resolution RGB microscopy images captured under diverse staining protocols, magnifications, and laboratory environments. To ensure consistent intensity scaling across heterogeneous microscopy sources, all pathology images are normalized by dividing pixel intensities by 255, producing standardized $[0,1]$ values while preserving fine-grained cellular contrast and chromatic variations.

\textbf{9) Endoscope (266K):} The endoscope component of MedVerse includes gastrointestinal, urological, and surgical endoscopy datasets, covering HyperKvasir \cite{borgli2020hyperkvasir}, Cataract-101 \cite{schoeffmann2018cataract}, PitVis2023 \cite{das2024pitvis}, Endoscapes-CVS \cite{murali2023endoscapes} and PSI-AVA \cite{valderrama2022towards}, thus comprising more than 266K images. It provides high-resolution RGB endoscopic frames acquired under diverse illumination settings, endoscope models, and clinical procedures. For video-based datasets, we subsample one frame every 30 frames to reduce temporal redundancy while retaining representative appearance variations across procedures. To ensure consistent intensity scaling across heterogeneous sources, all endoscopic images are normalized by dividing pixel intensities by 255, mapping them to the $[0,1]$ range while preserving fine-grained mucosal texture characteristics.

\textbf{10) Clinical photography (401K):} The clinical photography component of MedVerse primarily consists of skin-lesion datasets from the ISIC archive \cite{kurtansky2024slice} with over 401K images. It contains RGB clinical photographs captured under diverse lighting conditions, camera devices, and patient demographics, covering a wide range of benign and malignant skin lesions. To ensure consistent intensity scaling across heterogeneous sources, all clinical photographs are normalized by dividing pixel intensities by 255, mapping them to the $[0,1]$ range while preserving essential color cues for lesion morphology and pigmentation analysis. 
\begin{table*}[htb]
    \centering
    \caption{The composition of our Medical Vision Universe (MedVerse) dataset. It covers 10 medical image modalities, including more than 4M images, and is able to serve the pretraining of large-scale medical vision MMFMs.}
    \label{tab:data_pre}
  {
    \begin{tabular}{lccccc}
        \textbf{Dataset}
        & \textbf{Modality}
        & \textbf{Anatomy}
        & \textbf{Images (Slices)}
        & \textbf{Normalization}
        & \textbf{Link}
        \\
        \hline
        AutoPET-CT \cite{gatidis2024results}
        & CT
        & Whole body
        & 558,473
        & $\frac{\text{clip}(x,[-200,400])+200}{600}$
        & \href{https://autopet.grand-challenge.org/Dataset/}{$\bigstar$}
        \\
        ImageCAS \cite{zeng2023imagecas}
        & CT
        & Heart
        & 257,496
        & $\frac{\text{clip}(x,[-200,400])+200}{600}$
        & \href{https://www.kaggle.com/datasets/xiaoweixumedicalai/imagecas/}{$\bigstar$}
        \\
        Topbrain-CT \cite{yang2025benchmarking}
        & CT
        & Brain
        & 5,647
        & $\frac{\text{clip}(x,[-200,400])+200}{600}$
        & \href{https://zenodo.org/records/16623496}{$\bigstar$}
        \\
        AutoPET-PET \cite{gatidis2024results}
        & PET
        & Whole body
        & 354,826
        & $\frac{\frac{\text{clip}(x,[p_{1\%},p_{99\%}])-\mu}{\sigma}-x_{min}}{x_{max}-x_{min}}$
        & \href{https://autopet.grand-challenge.org/Dataset/}{$\bigstar$}
        \\
        AutoPET-SUV \cite{gatidis2024results}
        & PET (SUV)
        & Whole body
        & 351,900
        & $\frac{\text{clip}(x,[0,20])}{20}$
        & \href{https://autopet.grand-challenge.org/Dataset/}{$\bigstar$}
        \\
        Totalsegmenter-MR \cite{akinci2025totalsegmentator}
        & MR
        & Whole body
        & 16,059
        & $\frac{\text{clip}(x,[p_{1\%},p_{99\%}])-x_{min}}{x_{max}-x_{min}}$
        & \href{https://zenodo.org/records/14710732}{$\bigstar$}
        \\
        ACDC \cite{bernard2018deep}
        & MR
        & Heart
        & 1,902
        & $\frac{\text{clip}(x,[p_{1\%},p_{99\%}])-x_{min}}{x_{max}-x_{min}}$
        & \href{https://www.creatis.insa-lyon.fr/Challenge/acdc/}{$\bigstar$}
        \\
        MRNet \cite{bien2018deep}
        & MR
        & Knee
        & 116,624
        & $\frac{\text{clip}(x,[p_{1\%},p_{99\%}])-x_{min}}{x_{max}-x_{min}}$
        & \href{https://aimi.stanford.edu/datasets/mrnet-knee-mris}{$\bigstar$}
        \\
        Duke-Breast-Cancer-MRI \cite{saha2018machine}
        & MR
        & Breast
        & 515,747
        & $\frac{\text{clip}(x,[p_{1\%},p_{99\%}])-x_{min}}{x_{max}-x_{min}}$
        & \href{https://www.cancerimagingarchive.net/collection/duke-breast-cancer-mri/}{$\bigstar$}
        \\
        Prostate-MRI-US-Biopsy \cite{fedorov2021nci,natarajan2020prostate}
        & MR
        & Prostate
        & 97,180
        & $\frac{\text{clip}(x,[p_{1\%},p_{99\%}])-x_{min}}{x_{max}-x_{min}}$
        & \href{https://www.cancerimagingarchive.net/collection/prostate-mri-us-biopsy/}{$\bigstar$}
        \\
        TopBrain-MR \cite{yang2025benchmarking}
        & MR
        & Brain
        & 4,604
        & $\frac{\text{clip}(x,[p_{1\%},p_{99\%}])-x_{min}}{x_{max}-x_{min}}$
        & \href{https://zenodo.org/records/16623496}{$\bigstar$}
        \\
        BraTS-GLI \cite{de20242024}
        & MR
        & Brain
        & 802,341
        & $\frac{\text{clip}(x,[p_{1\%},p_{99\%}])-x_{min}}{x_{max}-x_{min}}$
        & \href{https://www.synapse.org/Synapse:syn53708249/wiki/627491}{$\bigstar$}
        \\
        LLDMMR \cite{lou2025sdr}
        & MR
        & Abdomen
        & 198,918
        & $\frac{\text{clip}(x,[p_{1\%},p_{99\%}])-x_{min}}{x_{max}-x_{min}}$
        & \href{https://github.com/LMMMEng/LLD-MMRI-Dataset}{$\bigstar$}
        \\
        ChestX-ray8 \cite{wang2017chestx}
        & X-ray
        & Chest
        & 112,120
        & x/255
        & \href{https://nihcc.app.box.com/v/ChestXray-NIHCC}{$\bigstar$}
        \\
        CheXpert \cite{irvin2019chexpert}
        & X-ray
        & Chest
        & 223,648
        & x/255
        & \href{https://stanfordmlgroup.github.io/competitions/chexpert/}{$\bigstar$}
        \\
        Mammo-Bench \cite{bhole2025mammo}
        & X-ray
        & Breast
        & 71,844
        & x/255
        & \href{https://india-data.org/dataset-details/c86fb00c-0fb8-4e0e-85a2-4d415f9c1ada}{$\bigstar$}
        \\
        MURA \cite{rajpurkar2017mura}
        & X-ray
        & Bone
        & 40,005
        & x/255
        & \href{https://stanfordmlgroup.github.io/competitions/mura/}{$\bigstar$}
        \\
        vzrad2 \cite{vzrad2_dataset}
        & X-ray
        & Tooth
        & 8,188
        & x/255
        & \href{https://www.kaggle.com/datasets/henriquerezermosqur/dental-x-ray-computacional-vision-segmentation/data}{$\bigstar$}
        \\
        ADSD \cite{danilov2021real}
        & X-ray
        & Coronary artery
        & 8,325
        & x/255
        & \href{https://data.mendeley.com/datasets/ydrm75xywg/2}{$\bigstar$}
        \\
        ARCADE \cite{popov2024dataset}
        & X-ray
        & Coronary artery
        & 3,000
        & x/255
        & \href{https://zenodo.org/records/10390295}{$\bigstar$}
        \\
        XACD \cite{ma2021self}
        & X-ray
        & Coronary artery
        & 1,621
        & x/255
        & \href{https://github.com/AISIGSJTU/SSVS}{$\bigstar$}
        \\
        Thyroid Ultrasound Cine-clip \cite{ThyroidUltrasoundCineclip}
        & Ultrasound
        & Thyroid
        & 1,737
        & x/255
        & \href{https://aimi.stanford.edu/datasets/thyroid-ultrasound-cine-clip}{$\bigstar$}
        \\
        EchoNet-Dynamic \cite{ouyang2019echonet,ouyang2020video}
        & Ultrasound
        & Heart
        & 63,867
        & x/255
        & \href{https://echonet.github.io/dynamic/}{$\bigstar$}
        \\
        HC18 \cite{van2018automated}
        & Ultrasound
        & Fetal head
        & 1,334
        & x/255
        & \href{https://hc18.grand-challenge.org/}{$\bigstar$}
        \\
        BUSI \cite{al2020dataset}
        & Ultrasound
        & Breast
        & 780
        & x/255
        & \href{https://scholar.cu.edu.eg/?q=afahmy/pages/dataset}{$\bigstar$}
        \\
        UBPD \cite{ding2022mallesnet}
        & Ultrasound
        & Brachial plexus
        & 955
        & x/255
        & \href{https://ubpd.worldwidetracing.com:9443/}{$\bigstar$}
        \\
        MESSIDOR \cite{decenciere2014feedback}
        & Fundus
        & Eye
        & 1,200
        & x/255
        & \href{https://www.adcis.net/en/third-party/messidor/}{$\bigstar$}
        \\
        Refuge2 \cite{fang2022refuge2,orlando2020refuge}
        & Fundus
        & Eye
        & 1,200
        & x/255
        & \href{https://www.kaggle.com/datasets/victorlemosml/refuge2}{$\bigstar$}
        \\
        APTOS2019 \cite{aptos2019}
        & Fundus
        & Eye
        & 5,590
        & x/255
        & \href{https://ahmadelsallab.github.io/APTOS/}{$\bigstar$}
        \\
        Eyepacs \cite{gulshan2016development}
        & Fundus
        & Eye
        & 88,702
        & x/255
        & \href{https://huggingface.co/datasets/bumbledeep/eyepacs}{$\bigstar$}
        \\
        ROSE \cite{ma2020rose}
        & OCT
        & Eye
        & 229
        & x/255
        & \href{https://imed.nimte.ac.cn/dataofrose.html}{$\bigstar$}
        \\
        OCTA500-OCTA \cite{li2024octa}
        & OCT
        & Eye
        & 600
        & x/255
        & \href{https://ieee-dataport.org/open-access/octa-500}{$\bigstar$}
        \\
        OCTA500-OCT \cite{li2024octa}
        & OCT
        & Eye
        & 600
        & x/255
        & \href{https://ieee-dataport.org/open-access/octa-500}{$\bigstar$}
        \\
        ZhangLabData-OCT \cite{kermany2018identifying}
        & OCT
        & Eye
        & 109,309
        & x/255
        & \href{https://data.mendeley.com/datasets/rscbjbr9sj/3}{$\bigstar$}
        \\
        BBBC041v1 \cite{ljosa2012annotated}
        & Pathology
        & Blood smears
        & 1,328
        & x/255
        & \href{https://bbbc.broadinstitute.org/BBBC041}{$\bigstar$}
        \\
        Lacuna Malaria Detection Challenge \cite{ljosa2012annotated}
        & Pathology
        & Blood smears
        & 3,925
        & x/255
        & \href{https://zindi.africa/competitions/lacuna-malaria-detection-challenge}{$\bigstar$}
        \\
        Lacuna Malaria Datasets \cite{DVN/VEADSE_2023}
        & Pathology
        & Blood smears
        & 5,059
        & x/255
        & \href{https://dataverse.harvard.edu/dataset.xhtml?persistentId=doi:10.7910/DVN/VEADSE}{$\bigstar$}
        \\
        SegPath \cite{komura2023restaining}
        & Pathology
        & 8 kinds of cells
        & 158,687
        & x/255
        & \href{https://dakomura.github.io/SegPath/}{$\bigstar$}
        \\
        HyperKvasir \cite{borgli2020hyperkvasir}
        & Endoscope
        & Stomach, intestines
        & 110,079
        & x/255
        & \href{https://datasets.simula.no/hyper-kvasir/}{$\bigstar$}
        \\ 
        Cataract-101 \cite{schoeffmann2018cataract}
        & Endoscope
        & Eye
        & 42,157
        & x/255
        & \href{https://ftp.itec.aau.at/datasets/ovid/cat-101/}{$\bigstar$}
        \\
        PitVis2023 \cite{das2024pitvis}
        & Endoscope
        & Pituitary
        & 96,272
        & x/255
        & \href{https://rdr.ucl.ac.uk/articles/dataset/PitVis_Challenge_Endoscopic_Pituitary_Surgery_videos/26531686}{$\bigstar$}
        \\
        Endoscapes-CVS \cite{murali2023endoscapes}
        & Endoscope
        & Gall bladder
        & 55,783
        & x/255
        & \href{https://github.com/CAMMA-public/Endoscapes}{$\bigstar$}
        \\
        PSI-AVA \cite{valderrama2022towards}
        & Endoscope
        & Prostate
        & 72,160
        & x/255
        & \href{https://drive.google.com/drive/folders/1dSVOuSbfudRgod8cLlONm2VOv35HEIBj}{$\bigstar$}
        \\
        ISIC challenge 2024 training \cite{kurtansky2024slice}
        & Photo
        & Skin
        & 401,059
        & x/255
        & \href{https://challenge2024.isic-archive.com/}{$\bigstar$}
        \\
        \hline
        \textbf{Total MedVerse}
        &
        &
        & \textbf{4,973,080}
        &
    \end{tabular}
    }
\end{table*}

\subsubsection{Downstream dataset}
We carefully curate our downstream datasets for a rigorous, diverse, and clinically meaningful evaluation. The selected datasets span 10 imaging modalities, 26 tasks, and a wide range of anatomical regions, enabling us to probe the capabilities of multi-modality FMs. Our choices are guided by three key principles: \textit{1) Clinical and anatomical coverage:} We include datasets from major diagnostic categories such as cardiac, abdominal, neurological, ophthalmic, and oncological imaging to evaluate cross-anatomy transferability. \textit{2) Heterogeneous modalities and acquisition settings:} By incorporating heterogeneous modalities like CT, MRI, PET, X-ray, fundus, etc., we enable the evaluations to reflect the broad variability in imaging physics, contrast mechanisms, sensor characteristics, and clinical workflows encountered in real applications. \textit{3) Task diversity and annotation richness:} The tasks cover organ segmentation, image-level classification, and fine-grained segmentation tasks, enabling us to assess both high-level semantic understanding and low-level anatomical precision. The specific information of the downstream task is described in Tab.\ref{tab:data_down}.

\begin{table*}
    \centering
    \caption{The composition of the datasets in our downstream evaluation. It covers 26 tasks and 10 medical image modalities with varied anatomies.}
    \label{tab:data_down}
   \resizebox{\linewidth}{!}
  {
    \begin{tabular}{lccclcccccc}
        \textbf{Dataset}
        & \textbf{Modality}
        & \textbf{Anatomy}
        & \textbf{Task}
        & \textbf{Targets}
        & \textbf{Train}
        & \textbf{Val}
        & \textbf{Test}
        & \textbf{Link}
        \\
        \hline
        WHS-CT (+CAT08)$_{S}$ \cite{zhuang2019evaluation,metz2009coronary}
        & CT
        & Heart
        & Segmentation
        & 8 heart structures
        & 5,176
        & 0
        & 4,096
        & \href{https://zmiclab.github.io/zxh/0/mmwhs/}{$\bigstar$}
        \\
        KiPA22$_{S}$ \cite{he2021meta}
        & CT
        & Kidney
        & Segmentation
        & 4 kidney cancer structures
        & 13,846
        & 5,864
        & 5,959
        & \href{https://kipa22.grand-challenge.org/}{$\bigstar$}
        \\
        SISVSE$_{S}$ \cite{yoon2022surgical}
        & Endoscope
        & Stomach
        & Segmentation
        & 31 surgical scene objects
        & 2,718
        & 336
        & 1,456
        & \href{https://sisvse.github.io/}{$\bigstar$}
        \\
        CholecSeg8k$_{S}$ \cite{hong2020cholecseg8k}
        & Endoscope
        & Gall bladder
        & Segmentation
        & 13 cholecystectomy objects
        & 4,320
        & 1,120
        & 2,640
        & \href{https://www.kaggle.com/datasets/newslab/cholecseg8k}{$\bigstar$}
        \\
        IDRiD$_{S}$ (task 1) \cite{porwal2018indian}
        & Fundus
        & Eye
        & Segmentation
        & 4 lesions and optic disc
        & 54
        & 0
        & 27
        & \href{https://idrid.grand-challenge.org/}{$\bigstar$}
        \\
        FIVES$_{S}$ \cite{jin2022fives}
        & Fundus
        & Eye
        & Segmentation
        & Fundus vessels
        & 200
        & 100
        & 500
        & \href{https://figshare.com/articles/figure/FIVES_A_Fundus_Image_Dataset_for_AI-based_Vessel_Segmentation/19688169/1}{$\bigstar$}
        \\
        FeTA21$_{S}$ \cite{payette2021automatic}
        & MR
        & Brain
        & Segmentation
        & 13 feta brain tissues
        & 5,120
        & 5,120
        & 10,240
        & \href{https://feta.grand-challenge.org/feta-2021/}{$\bigstar$}
        \\
        CANDI$_{S}$ \cite{kennedy2012candishare}
        & MR
        & Brain
        & Segmentation
        & 28 brain tissues
        & 5,120
        & 2,560
        & 5,504
        & \href{https://www.nitrc.org/projects/candi_share/}{$\bigstar$}
        \\
        OCTDL$_{C}$ \cite{kulyabin2024octdl}
        & OCT
        & Eye
        & Classification
        & 7 kinds of diseases
        & 1,030
        & 411
        & 623
        & \href{https://github.com/MikhailKulyabin/OCTDL}{$\bigstar$}
        \\
        ICF$_{S}$ \cite{ahmed2022deep}
        & OCT
        & Eye
        & Segmentation
        & Cystoid Macular Edema
        & 730
        & 292
        & 438
        & \href{https://www.kaggle.com/datasets/zeeshanahmed13/intraretinal-cystoid-fluid}{$\bigstar$}
        \\
        TNBC$_{S}$ \cite{wu2021single}
        & Pathology
        & Breast
        & Segmentation
        & Cell
        & 20
        & 10
        & 20
        & \href{https://github.com/PeterJackNaylor/DRFNS?tab=readme-ov-file}{$\bigstar$}
        \\
        LSC$_{C}$ \cite{janowczyk2016deep}
        & Pathology
        & Lymphoma
        & Classification
        & 3 Lymphoma Sub-types
        & 188
        & 73
        & 113
        & \href{https://andrewjanowczyk.com/deep-learning/}{$\bigstar$}
        \\
        BCSS$_{S}$ \cite{amgad2019structured}
        & Pathology
        & Breast
        & Segmentation
        & 5 kinds of tissues or regions
        & 75
        & 30
        & 46
        & \href{https://github.com/PathologyDataScience/BCSS}{$\bigstar$}
        \\
        MoNuSAC$_{S}$ \cite{verma2021monusac2020}
        & Pathology
        & 4 kinds of organs
        & Segmentation
        & 4 kinds of nuclei
        & 167
        & 42
        & 101
        & \href{https://hf-mirror.com/datasets/RationAI/MoNuSAC}{$\bigstar$}
        \\
        SCR$_{S}$ \cite{van2006segmentation}
        & X-ray
        & Chest
        & Segmentation
        & 5 Chest structures
        & 25
        & 47
        & 175
        & \href{https://www.jsrt.or.jp/data/english/}{$\bigstar$}\href{https://www.isi.uu.nl/research/databases/}{$\bigstar$}
        \\
        PDCXR$_{C}$ \cite{kermany2018identifying}
        & X-ray
        & Chest
        & Classification
        & Pneumonia
        & 3,659
        & 1,573
        & 624
        & \href{https://data.mendeley.com/datasets/rscbjbr9sj/3}{$\bigstar$}
        \\
        DCA1$_{S}$ \cite{cervantes2019automatic}
        & X-ray
        & Coronary artery
        & Segmentation
        & Coronary artery
        & 25
        & 25
        & 84
        & \href{https://www.kaggle.com/datasets/bard2024/database-x-ray-coronary-angiograms-dca1}{$\bigstar$}
        \\
        DEX$_{S}$ \cite{liu2024dias}
        & X-ray
        & Brain
        & Segmentation
        & Brain artery
        & 30
        & 10
        & 20
        & \href{https://github.com/lseventeen/DIAS}{$\bigstar$}
        \\
        DSCA$_{S}$ \cite{zhang2025dsca}
        & X-ray
        & Brain
        & Segmentation
        & Brain artery
        & 112
        & 44
        & 68
        & \href{https://github.com/jiongzhang-john/DSCA}{$\bigstar$}
        \\
        STS-Tooth-2D$_{S}$ \cite{wang2025miccai}
        & X-ray
        & Tooth
        & Segmentation
        & Tooth
        & 100
        & 100
        & 700
        & \href{https://toothfairychallenges.github.io/miccai2023/}{$\bigstar$}
        \\
        UNS$_{S}$ \cite{KaggleUltrasoundNerve2016}
        & Ultrasound
        & Nerve
        & Segmentation
        & Nerve
        & 2,758
        & 1,078
        & 1,799
        & \href{https://www.kaggle.com/competitions/ultrasound-nerve-segmentation/overview}{$\bigstar$}
        \\
        TN5000$_{C}$ \cite{zhang2025tn5000}
        & Ultrasound
        & Thyroid nodule
        & Classification
        & Thyroid nodule
        & 3,500
        & 500
        & 1,000
        & \href{https://github.com/Qingyanyichen/TN5000-2025}{$\bigstar$}
        \\
        FHU$_{S}$ \cite{van2018automated}
        & Ultrasound
        & Fetus
        & Segmentation
        & Fetal head
        & 499
        & 200
        & 300
        &\href{https://www.kaggle.com/datasets/ankit8467/fetal-head-ultrasound-dataset-for-image-segment}{$\bigstar$}
        \\
        AutoPET-org$_{S}$ \cite{zhang2025seganypet,gatidis2022autopet}
        & PET
        & Whole body
        & Segmentation
        & 12 organs or structures
        & 18,076
        & 6,386
        & 10,465
        & \href{https://github.com/YichiZhang98/SegAnyPET}{$\bigstar$}
        \\
        SLSC$_{S}$ \cite{makhresearch_skin_lesion_2025}
        & Photo
        & Skin
        & Segmentation
        & Lesion
        & 6,675
        & 1,911
        & 964
        & \href{https://huggingface.co/datasets/makhresearch/skin-lesion-segmentation-classification}{$\bigstar$}
        \\
        SLSC$_{C}$ \cite{makhresearch_skin_lesion_2025}
        & Photo
        & Skin
        & Classification
        & 7 skin diseases
        & 6,675
        & 1,911
        & 964
        & \href{https://huggingface.co/datasets/makhresearch/skin-lesion-segmentation-classification}{$\bigstar$}  
    \end{tabular}
    }
\end{table*}
\subsection{Implementations}
Our experiments are implemented in PyTorch \cite{paszke2019pytorch}, a widely used deep learning framework. Our DEX is pretrained on two NVIDIA H100 Tensor Core GPUs with 80 GB memory, and all models are adapted on NVIDIA A100 SXUM4 GPUs with 40 GB memory. Specifically,
\subsubsection{Pretraining implementation}
During pretraining, our DEX is optimized by AdamW \cite{loshchilovdecoupled} with a base learning rate $r_{\text{base}}=10^{-4}$. The initial learning rate is $r_{\text{init}}=r_{\text{basic}}*B/512$, where batch size is $B=832$. We adopt a 10-epoch warm-up schedule to stabilize training, after which the learning rate is decayed following a cosine schedule. In our adaptive frequency-aware noise mechanism, the noise standard deviation $\sigma$ is initialized to 1 and annealed using the same cosine decay. The balance loss weight $\lambda_{\text{bal}}$ is initialized at 0.01 and is reduced using cosine decay. For data augmentation, we apply random horizontal flipping, random affine transformations, and random resized cropping, resulting in the images with $256 \times 256$ size. Then, the images are normalized via z-score with the mean of [0.485, 0.456, 0.406] and std of [0.229, 0.224, 0.225] following \cite{he2022masked}. Our DEX network adopts the ViT-B \cite{dosovitskiy2020image} as the basic architecture that applies global self-attention before our DEX module and uses the MLP layers as the experts and Director. We utilize the MAE \cite{he2022masked} as the self-supervised learning loss $\mathcal{L}_{\text{self}}$. 

\subsubsection{Downstream implementation}
Following \cite{He_2025_ICCV}, we utilize the fine-tuning evaluation to demonstrate the adaptation ability in downstream tasks. All segmentation tasks are implemented via the UNETR \cite{hatamizadeh2022unetr}, and all classification tasks adopt a classification head appended to the backbone network. The gradient optimizes all parameters through the frameworks during the training. All the downstream tasks are trained by AdamW \cite{loshchilovdecoupled} optimizer with the learning rate of $10^{-4}$. We adopt a 10-epoch warm-up schedule to stabilize training, after which the learning rate is decayed following a cosine schedule. For segmentation tasks, we use the sum of Dice loss \cite{zhao2020rethinking} and cross-entropy loss \cite{mao2023cross} between the predicted masks and ground truths to learn segmentation. For classification tasks, we utilize the cross-entropy loss \cite{mao2023cross} between the predicted categories and ground truths to train the classification.

\section{More Framework Analysis and Results}
\begin{table}[htb]
    \centering
    \caption{The comparison of the adaptation via experts (DEX-$\mathcal{E}$) and Director (DEX-$\mathcal{D}$). The DEX-$\mathcal{E}$ has great specialization ability, bringing better transferring ability.}
  \resizebox{\linewidth}{!}
  {
    \begin{tabular}{l|cc|ccccccc}
        \multirow{2}{*}{\textbf{Task}}
        & DEX-$\mathcal{E}$
        & DEX-$\mathcal{D}$
        & DEX-$\mathcal{E}$
        & DEX-$\mathcal{D}$
        \\
        & \textit{scratch}
        & \textit{scratch}
        & MedVerse
        & MedVerse
        \\
        \hline
        \textbf{CT}-WHS$_{S}$
        & 77.9
        & 77.8
        & 88.9$_{+11.0}$
        & 82.6$_{+4.8}$
        \\ 
        \textbf{Fundus}-IDRiD$_{S}$
        & 33.7
        & 42.0
        & 61.2$_{+27.5}$
        & 60.8$_{+18.8}$
        \\ 
        \textbf{Path}-TNBC$_{S}$
        & 73.1
        & 72.2
        & 81.1$_{8.0}$
        & 77.3$_{8.1}$
        \\ 
        \textbf{X-ray}-PDCXR$_{C}$
        & 82.7
        & 83.6
        & 90.7$_{+8.0}$
        & 85.6$_{+2.0}$
        \\ 
        \textbf{Photo}-SISC$_{C}$
        & 54.4
        & 55.4
        & 80.6$_{26.2}$
        & 58.8$_{3.4}$
    \end{tabular}
    }
    \label{tab:AD}
\end{table}

\subsection{Adaptation via Experts \textit{\textbf{v.s.}} Director}
As shown in Tab. \ref{tab:AD}, we compare the downstream adaptation of DEX when fine-tuning either the experts (DEX-$\mathcal{E}$, $K$=2) or the Director (DEX-$\mathcal{D}$, which degenerates into a ViT-base structure), aiming to understand how pretrained modular roles contribute to task transfer. Three consistent observations emerge. 1) When trained from scratch, the two variants achieve comparable performance, indicating that without pretraining, neither the experts nor the director alone provides an inherent advantage. 2) After MedVerse pretraining, DEX-$\mathcal{E}$ yields substantially larger improvements across almost all tasks, for example, +11.0 on CT-WHS. This evidence suggests that experts acquire highly specialized, modality-aware knowledge, making their parameter subspaces more directly transferable to new tasks. 3) DEX-$\mathcal{D}$ consistently improves over scratch but lags behind DEX-$\mathcal{E}$, reflecting the director's role as a modality aggregator that regulates high-level alignment rather than encoding detailed intra-modality structures. Collectively, these results confirm our design intuition: the experts internalize fine-grained modality-specific semantics during pretraining, while the director primarily learns to mediate alignment. Thus, adapting the experts unleashes the specialized representations accumulated during pretraining, leading to superior downstream performance across heterogeneous medical tasks.

\begin{table*}
  \centering
  \caption{A total of 26 publicly available tasks are involved in our experiment for wide evaluations. The ``S'' and ``C'' are the segmentation and classification tasks.}\label{tab:comp_all}
  \resizebox{\linewidth}{!}
  {
  \begin{tabular}{l|c|ccccccccccc|ccccccccccccc}
    \textbf{Model}
    & \color{gray}-
    & MAE
    & DINOv2
    & DINOv3
    & PLIP
    & CONCH
    & USFM
    & RETFound
    & PanDerm
    & LVMMed
    & MedSAM
    & MAE-MedVerse
    & \textbf{DEX}
    \\ 
    \textbf{Modality}
    & \color{gray}\textit{scratch}
    & NI
    & NI
    & NI
    & Path
    & Path
    & US
    & OCT
    & MM
    & MM
    & MM
    & MM
    & MM
    \\ 
    \textbf{Backbone}
    & \color{gray}ViT-B
    & ViT-B
    & ViT-B
    & ViT-B
    & ViT-B
    & ViT-B
    & ViT-B
    & ViT-L
    & ViT-B
    & ViT-B
    & ViT-B
    & ViT-B
    & ViT-B
    \\
    \textbf{Data size}
    & \color{gray}-
    & 1.28M
    & 120M
    & 1.7B
    & 208K
    & 1.17M
    & 2.19M
    & 736K
    & 2.15M
    & 1.3M
    & 1.57M
    & 4.97M
    & 4.97M
    \\
    \hline
    \textbf{CT}-WHS$_{S}$ \cite{zhuang2019evaluation,metz2009coronary}
    & \color{gray}77.8
    & 84.4$_{+6.6}$
    & \cellcolor[HTML]{E9DAF3}\textbf{88.9$_{+11.1}$}
    & \cellcolor[HTML]{C4D2E7}88.7$_{+10.9}$
    & 86.2$_{+8.4}$
    & 86.9$_{+9.1}$
    & 83.9$_{+6.1}$
    & 81.2$_{+3.4}$
    & \cellcolor[HTML]{C4D2E7}88.7$_{+10.9}$
    & 85.2$_{+7.4}$
    & 82.7$_{+4.9}$
    & 84.3$_{+6.5}$
    & \cellcolor[HTML]{E9DAF3}\textbf{88.9$_{+11.1}$}
    \\
    \textbf{CT}-KiPA22$_{S}$ \cite{he2021meta}
    & \color{gray}68.4
    & 77.1$_{+8.7}$
    & 77.9$_{+9.5}$
    & \cellcolor[HTML]{E9DAF3}\textbf{79.7$_{+11.3}$}
    & 72.3$_{+3.9}$
    & 78.0$_{+9.6}$
    & 70.9$_{+2.5}$
    & 74.6$_{+6.2}$
    & 77.5$_{+9.1}$
    & \cellcolor[HTML]{C4D2E7}78.3$_{+9.9}$
    & 73.6$_{+5.2}$
    & 76.5$_{+8.1}$
    & \cellcolor[HTML]{C4D2E7}78.9$_{+10.5}$
    \\
    \textbf{Endo}-SISVSE$_{S}$ \cite{yoon2022surgical}
    & \color{gray}53.3
    & 69.0$_{+15.7}$
    & 64.5$_{+11.2}$
    & \cellcolor[HTML]{E9DAF3}\textbf{74.5$_{+21.2}$}
    & 58.9$_{+5.6}$
    & 68.3$_{+15.0}$
    & 65.1$_{+11.8}$
    & 66.1$_{+12.8}$
    & 68.3$_{+15.0}$
    & 70.5$_{+17.2}$
    & \cellcolor[HTML]{C4D2E7}71.6$_{+18.3}$
    & 65.0$_{+11.7}$
    & \cellcolor[HTML]{C4D2E7}70.9$_{+17.6}$
    \\
    \textbf{Endo}-CholecSeg8k$_{S}$ \cite{hong2020cholecseg8k}
    & \color{gray}44.7
    & 55.4$_{+10.7}$
    & 57.2$_{+12.5}$
    & \cellcolor[HTML]{E9DAF3}\textbf{61.2$_{+16.5}$}
    & 53.0$_{+8.3}$
    & 56.3$_{+11.6}$
    & 53.9$_{+9.2}$
    & 53.0$_{+8.3}$
    & 55.6$_{+10.9}$
    & \cellcolor[HTML]{C4D2E7}58.4$_{+13.7}$
    & 56.9$_{+12.2}$
    & 54.4$_{+9.7}$
    & \cellcolor[HTML]{C4D2E7}59.1$_{+14.4}$
    \\
    \textbf{Fundus}-IDRiD$_{S}$ \cite{porwal2018indian}
    & \color{gray}42.0
    & 54.1$_{+12.1}$
    & 44.0$_{+2.0}$
    & 36.2$_{-5.8}$
    & 31.8$_{-10.2}$
    & 52.5$_{+10.5}$
    & 49.3$_{+7.3}$
    & 54.3$_{+12.3}$
    & 51.2$_{+9.2}$
    & 53.7$_{+11.7}$
    & \cellcolor[HTML]{C4D2E7}56.8$_{+14.8}$
    & \cellcolor[HTML]{C4D2E7}60.2$_{+18.2}$
    & \cellcolor[HTML]{E9DAF3}\textbf{61.2$_{+19.2}$}
    \\
    \textbf{Fundus}-FIVES$_{S}$ \cite{jin2022fives}
    & \color{gray}84.4
    & 89.2$_{+4.8}$
    & 87.1$_{+2.7}$
    & \cellcolor[HTML]{C4D2E7}89.5$_{+5.1}$
    & 88.3$_{+3.9}$
    & 88.3$_{+3.9}$
    & 86.3$_{+1.9}$
    & 87.4$_{+3.0}$
    & 89.3$_{+4.9}$
    & \cellcolor[HTML]{C4D2E7}89.5$_{+5.1}$
    & 86.6$_{+2.2}$
    & 87.1$_{+2.7}$
    & \cellcolor[HTML]{E9DAF3}\textbf{90.3$_{+5.9}$}
    \\
    \textbf{MR}-FeTA21$_{S}$ \cite{payette2021automatic}
    & \color{gray}35.2
    & \cellcolor[HTML]{E9DAF3}\textbf{56.5$_{+21.3}$}
    & 55.9$_{+20.7}$
    & \cellcolor[HTML]{C4D2E7}56.2$_{+21.0}$
    & 52.8$_{+17.6}$
    & 51.0$_{+15.8}$
    & 46.2$_{+11.0}$
    & 50.6$_{+15.4}$
    & 54.9$_{+19.7}$
    & 47.8$_{+12.6}$
    & 41.6$_{+6.4}$
    & 41.1$_{+5.9}$
    & \cellcolor[HTML]{E9DAF3}\textbf{56.5$_{+21.3}$}
    \\
    \textbf{MR}-CANDI$_{S}$ \cite{kennedy2012candishare}
    & \color{gray}88.9
    & \cellcolor[HTML]{C4D2E7}89.4$_{+0.5}$
    & 89.2$_{+0.3}$
    & 89.3$_{+0.4}$
    & 89.1$_{+0.2}$
    & 89.2$_{+0.3}$
    & 89.1$_{+0.2}$
    & 89.0$_{+0.1}$
    & \cellcolor[HTML]{C4D2E7}89.4$_{+0.5}$
    & \cellcolor[HTML]{E9DAF3}\textbf{89.5$_{+0.6}$}
    & 89.1$_{+0.2}$
    & 89.0$_{+0.1}$
    & \cellcolor[HTML]{C4D2E7}89.4$_{+0.5}$
    \\
    \textbf{OCT}-OCTDL$_{C}$ \cite{kulyabin2024octdl}
    & \color{gray}11.5
    & 66.2$_{+54.7}$
    & 75.9$_{+64.4}$
    & \cellcolor[HTML]{C4D2E7}83.1$_{+71.6}$
    & 52.8$_{+41.3}$
    & 74.3$_{+62.8}$
    & 61.5$_{+50.0}$
    & \cellcolor[HTML]{C4D2E7}88.2$_{+76.7}$
    & 75.8$_{+64.3}$
    & 75.3$_{+63.8}$
    & 48.9$_{+37.4}$
    & 72.5$_{+61.0}$
    & \cellcolor[HTML]{E9DAF3}\textbf{88.8$_{+77.3}$}
    \\
    \textbf{OCT}-ICF$_{S}$ \cite{ahmed2022deep}
    & \color{gray}83.7
    & 86.7$_{+3.0}$
    & 85.8$_{+2.1}$
    & \cellcolor[HTML]{C4D2E7}86.9$_{+3.2}$
    & 84.4$_{+0.7}$
    & 86.1$_{+2.4}$
    & 85.4$_{+1.7}$
    & \cellcolor[HTML]{E9DAF3}\textbf{87.0$_{+3.3}$}
    & 86.3$_{+2.6}$
    & 86.4$_{+2.7}$
    & 86.7$_{+3.0}$
    & \cellcolor[HTML]{C4D2E7}86.8$_{+3.1}$
    & 86.2$_{+2.5}$
    \\
    \textbf{Path}-LSC$_{C}$ \cite{janowczyk2016deep}
    & \color{gray}31.5
    & 52.2$_{+20.7}$
    & 53.3$_{+21.8}$
    & 54.4$_{+22.9}$
    & 55.7$_{+24.2}$
    & \cellcolor[HTML]{C4D2E7}66.6$_{+35.1}$
    & 59.5$_{+28.0}$
    & \cellcolor[HTML]{C4D2E7}60.7$_{+29.2}$
    & 50.4$_{+18.9}$
    & 54.6$_{+23.1}$
    & 56.8$_{+25.3}$
    & 51.3$_{+19.8}$
    & \cellcolor[HTML]{E9DAF3}\textbf{73.2$_{+41.7}$}
    \\
    \textbf{Path}-TNBC$_{S}$ \cite{wu2021single}
    & \color{gray}72.2
    & \cellcolor[HTML]{C4D2E7}81.3$_{+9.1}$
    & 75.4$_{+3.2}$
    & 74.9$_{+2.7}$
    & 74.1$_{+1.9}$
    & \cellcolor[HTML]{C4D2E7}81.8$_{+9.6}$
    & 79.5$_{+7.3}$
    & 68.0$_{-4.2}$
    & 81.2$_{+9.0}$
    & \cellcolor[HTML]{E9DAF3}\textbf{82.5$_{+10.3}$}
    & 78.0$_{+5.8}$
    & 78.4$_{+6.2}$
    & 81.1$_{+8.9}$
    \\
    \textbf{Path}-BCSS$_{S}$ \cite{amgad2019structured}
    & \color{gray}57.8
    & 68.3$_{+10.5}$
    & 62.6$_{+4.8}$
    & \cellcolor[HTML]{E9DAF3}\textbf{71.6$_{+13.8}$}
    & 58.7$_{+0.9}$
    & \cellcolor[HTML]{E9DAF3}\textbf{71.6$_{+13.8}$}
    & 62.8$_{+5.0}$
    & 63.4$_{+5.6}$
    & 69.0$_{+11.2}$
    & 70.2$_{+12.4}$
    & 64.9$_{+7.1}$
    & 66.3$_{+8.5}$
    & \cellcolor[HTML]{E9DAF3}\textbf{71.6$_{+13.8}$}
    \\
    \textbf{Path}-MoNuSAC$_{S}$ \cite{verma2021monusac2020}
    & \color{gray}26.9
    & 42.8$_{+15.9}$
    & 42.5$_{+15.6}$
    & 45.0$_{+18.1}$
    & 35.6$_{+8.7}$
    & \cellcolor[HTML]{C4D2E7}51.7$_{+24.8}$
    & 44.9$_{+18.0}$
    & 35.8$_{+8.9}$
    & 46.7$_{+19.8}$
    & \cellcolor[HTML]{E9DAF3}\textbf{56.7$_{+29.8}$}
    & 46.2$_{+19.3}$
    & 47.6$_{+20.7}$
    & \cellcolor[HTML]{C4D2E7}53.6$_{+26.7}$
    \\
    \textbf{X-ray}-PDCXR$_{C}$ \cite{kermany2018identifying}
    & \color{gray}83.6
    & 82.7$_{-0.9}$
    & 86.3$_{+2.7}$
    & \cellcolor[HTML]{C4D2E7}88.6$_{+5.0}$
    & 84.5$_{+0.9}$
    & \cellcolor[HTML]{C4D2E7}87.1$_{+3.5}$
    & 84.3$_{+0.7}$
    & 84.0$_{+0.4}$
    & \cellcolor[HTML]{C4D2E7}87.1$_{+3.5}$
    & 86.6$_{+3.0}$
    & 85.3$_{+1.7}$
    & 86.1$_{+2.5}$
    & \cellcolor[HTML]{E9DAF3}\textbf{90.7$_{+7.1}$}
    \\
    \textbf{X-ray}-SCR$_{S}$ \cite{van2006segmentation}
    & \color{gray}61.6
    & 94.0$_{+32.4}$
    & 92.0$_{+30.4}$
    & 93.0$_{+31.4}$
    & 91.7$_{+30.1}$
    & 93.5$_{+31.9}$
    & 92.3$_{+30.7}$
    & 92.9$_{+31.3}$
    & \cellcolor[HTML]{C4D2E7}94.4$_{+32.8}$
    & \cellcolor[HTML]{C4D2E7}94.4$_{+32.8}$
    & 93.1$_{+31.5}$
    & 92.8$_{+31.2}$
    & \cellcolor[HTML]{E9DAF3}\textbf{94.9$_{+33.3}$}
    \\
    \textbf{X-ray}-DCA1$_{S}$ \cite{cervantes2019automatic}
    & \color{gray}70.6
    & 75.6$_{+5.0}$
    & 75.2$_{+4.6}$
    & 74.2$_{+3.6}$
    & 73.5$_{+2.9}$
    & 75.6$_{+5.0}$
    & 74.0$_{+3.4}$
    & 74.6$_{+4.0}$
    & 75.7$_{+5.1}$
    & \cellcolor[HTML]{C4D2E7}76.4$_{+5.8}$
    & \cellcolor[HTML]{C4D2E7}75.8$_{+5.2}$
    & 74.6$_{+4.0}$
    & \cellcolor[HTML]{E9DAF3}\textbf{76.8$_{+6.2}$}
    \\
    \textbf{X-ray}-DEX$_{S}$ \cite{liu2024dias}
    & \color{gray}71.1
    & \cellcolor[HTML]{C4D2E7}73.9$_{+2.8}$
    & 73.6$_{+2.6}$
    & \cellcolor[HTML]{C4D2E7}73.9$_{+2.8}$
    & 72.7$_{+1.6}$
    & 73.1$_{+2.0}$
    & 72.7$_{+1.6}$
    & 73.3$_{+2.2}$
    & \cellcolor[HTML]{C4D2E7}73.9$_{+2.8}$
    & 73.8$_{+2.7}$
    & 72.2$_{+1.1}$
    & 73.3$_{+2.2}$
    & \cellcolor[HTML]{E9DAF3}\textbf{74.3$_{+3.2}$}
    \\
    \textbf{X-ray}-DSCA$_{S}$ \cite{zhang2025dsca}
    & \color{gray}82.6
    & 84.9$_{+2.3}$
    & 85.0$_{+2.4}$
    & \cellcolor[HTML]{E9DAF3}\textbf{85.8$_{+3.2}$}
    & 84.2$_{+1.6}$
    & 84.7$_{+2.1}$
    & 84.3$_{+1.7}$
    & 84.4$_{+1.8}$
    & 85.2$_{+2.6}$
    & \cellcolor[HTML]{C4D2E7}85.4$_{+2.8}$
    & 84.0$_{+1.4}$
    & 85.3$_{+2.7}$
    & \cellcolor[HTML]{E9DAF3}\textbf{85.8$_{+3.2}$}
    \\
    \textbf{X-ray}-STS-Tooth-2D$_{S}$ \cite{wang2025miccai}
    & \color{gray}90.1
    & 92.8$_{+2.7}$
    & 92.6$_{+2.5}$
    & \cellcolor[HTML]{E9DAF3}\textbf{92.9$_{+2.8}$}
    & 92.3$_{+2.2}$
    & 92.6$_{+2.5}$
    & 92.2$_{+2.1}$
    & 92.2$_{+2.1}$
    & \cellcolor[HTML]{E9DAF3}\textbf{92.9$_{+2.8}$}
    & \cellcolor[HTML]{E9DAF3}\textbf{92.9$_{+2.8}$}
    & 92.4$_{+2.3}$
    & 92.3$_{+2.2}$
    & 92.7$_{+2.6}$
    \\
    \textbf{US}-TN5000$_{C}$ \cite{zhang2025tn5000}
    & \color{gray}2.8
    & 77.9$_{+75.1}$
    & 76.2$_{+73.4}$
    & \cellcolor[HTML]{C4D2E7}79.5$_{+76.7}$
    & 18.3$_{+15.5}$
    & 77.8$_{+75.0}$
    & 64.9$_{+62.1}$
    & 72.4$_{+69.6}$
    & 73.1$_{+70.3}$
    & 77.3$_{+74.5}$
    & \cellcolor[HTML]{C4D2E7}79.3$_{+76.5}$
    & 78.0$_{+75.2}$
    & \cellcolor[HTML]{E9DAF3}\textbf{83.5$_{+80.7}$}
    \\
    \textbf{US}-UNS$_{S}$ \cite{KaggleUltrasoundNerve2016}
    & \color{gray}58.4
    & 68.4$_{+10.0}$
    & 61.5$_{+3.1}$
    & 69.3$_{+10.9}$
    & 61.2$_{+2.8}$
    & 63.3$_{+4.9}$
    & 59.7$_{+1.3}$
    & 63.4$_{+5.0}$
    & 65.4$_{+7.0}$
    & \cellcolor[HTML]{C4D2E7}70.3$_{+11.9}$
    & \cellcolor[HTML]{E9DAF3}\textbf{71.1$_{+12.7}$}
    & 64.7$_{+6.3}$
    & \cellcolor[HTML]{C4D2E7}70.2$_{+11.8}$
    \\
    \textbf{US}-FHU$_{S}$ \cite{van2018automated}
    & \color{gray}81.9
    & 92.8$_{+10.9}$
    & 91.8$_{+9.9}$
    & \cellcolor[HTML]{E9DAF3}\textbf{93.8$_{+11.9}$}
    & 88.7$_{+6.8}$
    & 91.5$_{+9.6}$
    & 85.8$_{+3.9}$
    & 89.3$_{+7.4}$
    & \cellcolor[HTML]{C4D2E7}93.2$_{+11.3}$
    & 92.8$_{+10.9}$
    & 92.8$_{+10.9}$
    & 88.8$_{+6.9}$
    & \cellcolor[HTML]{E9DAF3}\textbf{93.8$_{+11.9}$}
    \\
    \textbf{PET}-AutoPET-org$_{S}$ \cite{zhang2025seganypet,gatidis2022autopet}
    & \color{gray}7.6
    & 19.5$_{+11.9}$
    & 27.2$_{+19.6}$
    & 35.8$_{+28.2}$
    & 4.7$_{-2.9}$
    & 13.0$_{+5.4}$
    & 15.1$_{+7.5}$
    & \cellcolor[HTML]{C4D2E7}62.6$_{+55.0}$
    & \cellcolor[HTML]{E9DAF3}\textbf{65.0$_{+57.4}$}
    & 42.4$_{+34.8}$
    & \cellcolor[HTML]{C4D2E7}55.6$_{+48.0}$
    & 35.8$_{+28.2}$
    & 52.1$_{+44.4}$
    \\
    \textbf{Photo}-SLSC$_{S}$ \cite{makhresearch_skin_lesion_2025}
    & \color{gray}90.2
    & 93.1$_{+2.9}$
    & 92.7$_{+2.5}$
    & 93.4$_{+3.2}$
    & 91.2$_{+1.0}$
    & 93.0$_{+2.8}$
    & 92.1$_{+1.9}$
    & 91.9$_{+1.7}$
    & \cellcolor[HTML]{C4D2E7}93.5$_{+3.3}$
    & 92.9$_{+2.7}$
    & \cellcolor[HTML]{E9DAF3}\textbf{93.6$_{+3.4}$}
    & 91.3$_{+1.1}$
    & \cellcolor[HTML]{E9DAF3}\textbf{93.6$_{+3.4}$}
    \\
    \textbf{Photo}-SLSC$_{C}$ \cite{makhresearch_skin_lesion_2025}
    & \color{gray}55.4
    & 77.8$_{+22.4}$
    & 74.6$_{+19.2}$
    & \cellcolor[HTML]{E9DAF3}\textbf{82.8$_{+27.4}$}
    & 73.4$_{+18.0}$
    & \cellcolor[HTML]{C4D2E7}80.7$_{+25.3}$
    & 67.9$_{+12.5}$
    & 73.0$_{+17.6}$
    & 77.3$_{+21.9}$
    & 76.9$_{+21.5}$
    & 74.3$_{+18.9}$
    & 77.2$_{+21.8}$
    & \cellcolor[HTML]{C4D2E7}80.6$_{+25.2}$
    \\
    \hline
    \textbf{Avg.} (26 tasks)
    & \color{gray}59.0
    & 73.3$_{+14.3}$
    & 72.7$_{+13.7}$
    & 75.2$_{+16.2}$
    & 66.5$_{+7.5}$
    & 74.2$_{+15.2}$
    & 70.1$_{+11.1}$
    & 73.6$_{+14.6}$
    & \cellcolor[HTML]{C4D2E7}75.4$_{+16.4}$
    & \cellcolor[HTML]{C4D2E7}75.4$_{+16.4}$
    & 73.5$_{+14.5}$
    & 73.1$_{+14.1}$
    & \cellcolor[HTML]{E9DAF3}\textbf{78.4$_{+19.4}$}
  \end{tabular}
  }
\end{table*}

\subsection{Comparison Results on All Tasks}
As shown in Tab.\ref{tab:comp_all}, we present the comprehensive comparison across 26 diverse medical vision tasks, and DEX achieves the strongest overall results, obtaining the highest average score of 78.4 and consistently outperforming both large-scale general vision FMs (DINOv2, DINOv3) and uni-modality medical FMs (RETFound, PLIP, CONCH, MedSAM, LVMMed). Despite being trained on significantly fewer images than billion-scale general vision FMs, DEX demonstrates superior transferability, especially on challenging medical modalities such as PET, ultrasound, pathology, and OCT, where it yields clear performance gains even over domain-specific pretrained models. Notably, DEX excels in both low-contrast and high-noise scenarios, such as PET AutoPET-org and US TN5000, while also delivering top-tier performance on high-contrast tasks, including CT, MR, fundus, and X-ray. These consistent improvements across classification, segmentation, and instance segmentation tasks indicate that DEX learns modality-robust and semantically disentangled representations that generalize more effectively than existing medical or general-domain pretraining approaches.
\subsection{Attention Maps on More Modalities}
\begin{figure*}[htb]
  \centering
  \includegraphics[width=0.9\linewidth]{figs/heatmap_supp.pdf}
  \caption{Extended visualization of attention maps from DEX. Each row denotes a modality, and each column shows the attention of a query patch ({\color{red}$\bigstar$}). DEX focuses on semantically meaningful regions, showing strong semantic decoupling ability of internal images.}
  \label{fig:heatmap_all}
\end{figure*}
As shown in Fig.\ref{fig:heatmap_all}, we have visualized more attention maps on different modalities. Across all modalities, our DEX attends to semantically coherent and structurally meaningful regions, indicating that the pretraining successfully encourages strong semantic decoupling. For high-contrast modalities such as endoscopy, histology, and X-ray, the query patch ({\color{red}$\bigstar$}) reliably focuses on edges, anatomical boundaries, or tissue structures. This demonstrates that DEX effectively captures the underlying semantic subspaces that are relevant for downstream tasks, thereby supporting its strong transferability. In contrast, for low-contrast or high-noise modalities, like the PET, OCT, and ultrasound, the attention maps become more diffuse. Due to minimal gray-level variation and weak local texture cues, the pretrained representations struggle to further discriminate between visually similar regions, resulting in less sharply defined attention boundaries. Although semantic consistency is still preserved to some extent, the decoupling ability is naturally limited by the inherent information content of these modalities. Overall, these visualizations confirm that DEX learns the common representation across modalities and provide intuitive evidence for its superior downstream performance. Its pretrained semantic discrimination enables the model to adapt and identify regions across heterogeneous medical image modalities.

\subsection{Pattern Layout on More Modalities}
\begin{figure}[htb]
  \centering
  \includegraphics[width=\linewidth]{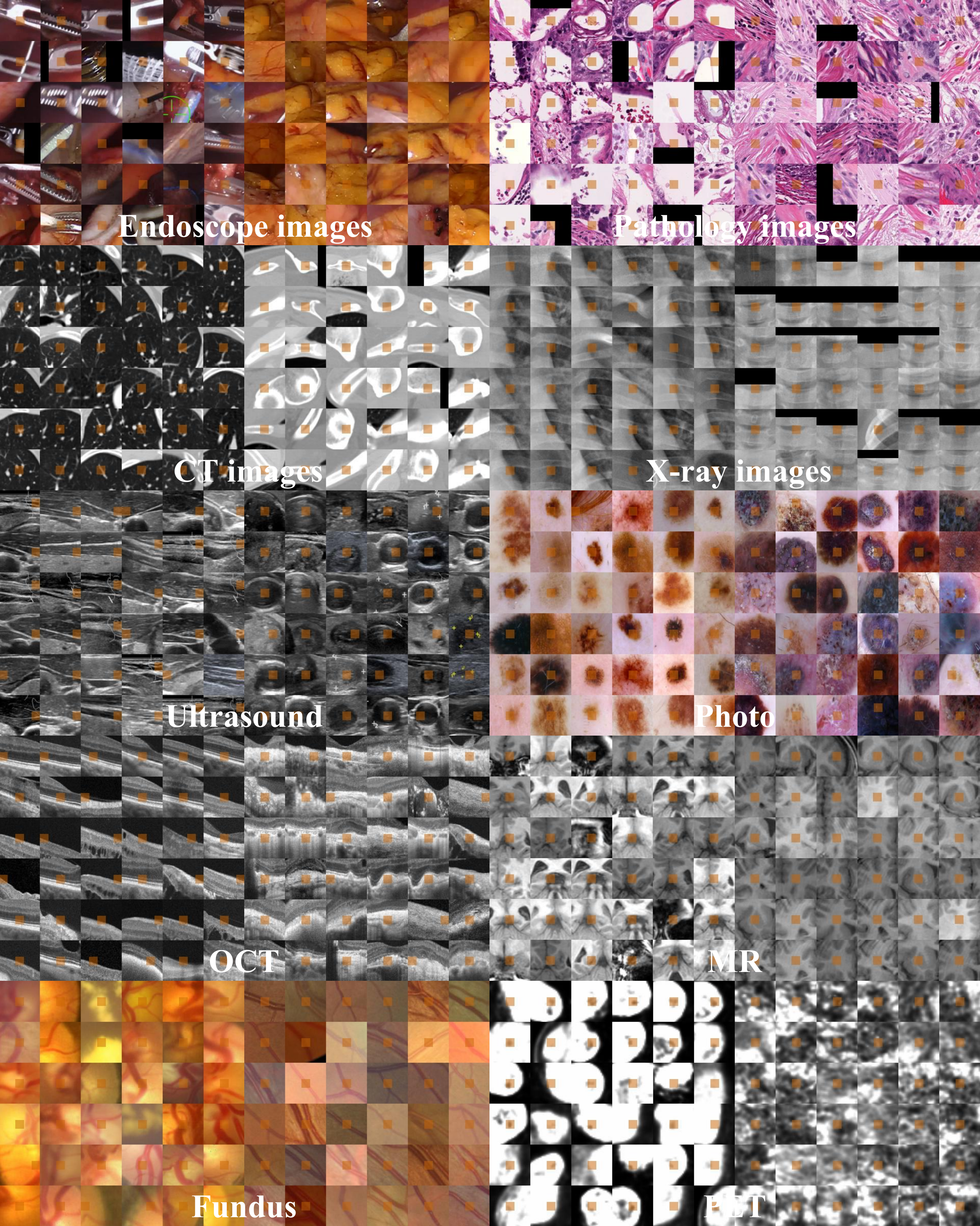}
  \caption{Extended pattern layout learned by DEX across 10 imaging modalities. For each modality, we visualize clusters of image patches that share high feature similarity in the learned representation space. DEX consistently discovers semantically coherent regions, such as instruments, fat, lumen, stroma, lung textures, bone structures, retinal layers, fundus vessels, and PET metabolic patterns, demonstrating its strong intra-modal discriminability and its ability to organize local visual patterns into meaningful semantic structures.}\label{fig:layout_all}
\end{figure}
As shown in Fig.\ref{fig:layout_all}, DEX learns clear pattern layouts across ten heterogeneous medical imaging modalities, including endoscopy, pathology, CT, X-ray, ultrasound, dermatology photos, OCT, MR, fundus, and PET. For each modality, we visualize representative feature clusters from the learned embedding space. The results reveal DEX's strong intra-modal structure-discovery ability: it consistently groups patches with similar structural, textural, or anatomical properties into coherent semantic clusters. Examples include instruments, fat, and mucosa in endoscopy; lumen and stroma in pathology; lung, bone, and soft-tissue patterns in CT and X-ray; echogenic textures in ultrasound; retinal layers in OCT; vascular and lesion patterns in fundus and photo images; and metabolic uptake regions in PET. Overall, the emergent pattern layout demonstrates that DEX not only encodes discriminative features but also uncovers modality-specific semantic regularities, forming structured and interpretable intra-modal manifolds that underlie its strong performance across diverse imaging tasks.
\subsection{Component Ablation on More Modalities}
\begin{table}[htb]
    \centering
    \caption{Component ablation across 10 representative modalities. Starting from self-supervised pretraining on MedVerse, we progressively introduce the expert and director modules in DEX. Results consistently demonstrate that both specialization and coordination contribute positively across heterogeneous modalities under the Non-IID setting.}
    \resizebox{\linewidth}{!}
    {
    \begin{tabular}{cccccc}
        Modality
        & Task
        & \textit{scratch}
        & $\mathcal{L}_{\text{self}}$ (MedVerse)
        & + Experts
        & + Director
        \\
        \hline
        X-ray
        & PDCXR$_{C}$
        & 83.6
        & 86.1$_{+2.5}$
        & 88.4$_{+4.8}$
        & 90.7$_{+7.1}$
        \\
        Path
        & TNBC$_{S}$
        & 72.2
        & 78.4$_{+6.2}$
        & 78.5$_{+6.3}$
        & 81.1$_{+8.9}$
        \\
        CT
        & WHS$_{S}$
        & 77.8
        & 84.3$_{+6.5}$
        & 86.0$_{+8.2}$
        & 88.9$_{+11.1}$
        \\
        Endo
        & SISVSE$_{S}$
        & 53.3
        & 65.0$_{+11.7}$
        & 67.7$_{+14.4}$
        & 70.9$_{+17.6}$
        \\
        Fundus
        & FIVES$_{S}$
        & 84.4
        & 87.1$_{+2.7}$
        & 87.5$_{+3.1}$
        & 90.3$_{+5.9}$
        \\
        MR
        & FeTA21$_{S}$
        & 35.2
        & 41.1$_{+5.9}$
        & 50.0$_{+14.8}$
        & 56.5$_{+21.3}$
        \\
        OCT
        & OCTDL$_{C}$
        & 11.5
        & 72.5$_{+67}$
        & 77.8$_{+66.3}$
        & 88.8$_{+77.3}$
        \\
        US
        & FHU$_{S}$
        & 81.9
        & 88.8$_{+6.9}$
        & 92.5$_{+10.6}$
        & 93.8$_{+11.9}$
        \\
        PET
        & AutoPET-org$_{S}$
        & 7.6
        & 35.8$_{+28.2}$
        & 49.2$_{+41.6}$
        & 52.1$_{+44.5}$
        \\
        Photo
        & $SISC_{C}$
        & 90.2
        & 91.3$_{+1.1}$
        & 92.3$_{+2.1}$
        & 93.6$_{+3.4}$
        \\
        \hline
        -
        & Avg
        & 59.8
        & 73.0$_{+13.3}$
        & 77.0$_{+17.2}$
        & 80.7$_{+20.9}$
    \end{tabular}
    }
    \label{tab:abla_supp}
\end{table}

Tab.~\ref{tab:abla_supp} presents component ablations across 10 representative modalities. Self-supervised pretraining on MedVerse consistently improves performance over training from scratch across all modalities, increasing the average performance from 59.8 to 73.0, demonstrating the effectiveness of large-scale multi-modality pretraining. Introducing the expert modules further improves the average performance to 77.0, with particularly notable gains in modalities with larger distribution gaps such as MR, PET, and OCT, suggesting that expert specialization helps alleviate the Non-IID challenge across heterogeneous modalities. Adding the director module further improves the average performance to 80.7, indicating that coordinating knowledge across experts is also critical for transferable representation learning. Overall, the consistent improvements across diverse modalities validate the effectiveness of emergent modular representations in multi-modality MVFMs.
\end{document}